\def\colorful{1}
\newcommand{\violet}[1]{{\color{violet}{#1}}}
\newcommand{\violet}[1]{{{#1}}}
\newcommand{\BayesOpt}{\mathrm{opt}}
\newcommand{\error}{\mathrm{error}}
\newcommand{\round}{\mathrm{round}}
\newcommand{\trunc}{\mathrm{trunc}}
\DeclareMathOperator*{\argmin}{arg\,min}
\newenvironment{restatement}[1]{%
  \manualtheoreminner
}{\endmanualtheoreminner}
\begin{document}

\title{Learning stochastic decision trees\vspace*{15pt}}

\author{Guy Blanc \vspace{8pt} \\ \hspace{-5pt}{\sl Stanford} \and \hspace{10pt} Jane Lange \vspace{8pt} \\
\hspace{4pt}  {\sl MIT}
\and Li-Yang Tan \vspace{8pt} \\ \hspace{-8pt} {\sl Stanford}}

\date{\vspace{5pt}\small{\today}}

\maketitle

\begin{abstract}

We give a quasipolynomial-time algorithm for learning {\sl stochastic decision trees} that is optimally resilient to adversarial noise.   Given an $\eta$-corrupted set of uniform random samples labeled by a size-$s$ stochastic decision tree, our algorithm runs in time $n^{O(\log(s/\eps)/\eps^2)}$
and returns a  hypothesis with error within an additive $2\eta + \eps$ of the Bayes optimal.  An additive $2\eta$ is the information-theoretic minimum.  

Previously no non-trivial algorithm with a guarantee of $O(\eta) + \eps$ was known, even for weaker noise models.  Our algorithm is furthermore proper, returning a hypothesis that is itself a decision tree; previously no such algorithm was known even in the noiseless setting. 

\end{abstract}
\thispagestyle{empty}

\newpage
\setcounter{page}{1}


\section{Introduction} 

Decision trees are a touchstone class in learning theory.  There is by now a rich and vast literature on the problem of learning decision trees, spanning three decades and studying it in a variety of models and from a variety of perspectives~\cite{EH89,Riv87,Blu92,Han93,Bsh93,KM93,BFJKMR94,HJLT96,KM99,MR02,JS06,OS07,GKK08,Lee09,KS06,KST09,HKY18,CM19,BLT-ICML,BLT-ITCS,BGLT-NeurIPS1,BDM20}.  

We consider the problem of learning {\sl stochastic decision trees}, a generalization of standard deterministic decision trees that allows for stochastic nodes.  This generalization broadens the expressive power of decision trees, enabling them to represent not just deterministic functions but also stochastic functions.  \Cref{fig:SDT} depicts a stochastic decision tree with two stochastic nodes, labeled `$\$$', one that branches on the outcome of a $\mathrm{Bernoulli}(0.8)$ random variable, and the other on the outcome of a $\mathrm{Bernoulli}(0.3)$ random variable.

\begin{figure}[htb]
\forestset{
  triangle/.style={
    node format={
      \noexpand\node [
      draw,
      shape=regular polygon,
      regular polygon sides=3,
      inner sep=0pt,
      outer sep=0pt,
      \forestoption{node options},
      anchor=\forestoption{anchor}
      ]
      (\forestoption{name}) {\foresteoption{content format}};
    },
    child anchor=parent,
  }
 }
\begin{align*}
    \hspace{3mm}{\small
    \!\begin{gathered}
    \begin{forest}
    for tree={
        grow=south,
        circle, draw, minimum size=11mm, l sep = 8mm, s sep = 3mm, inner sep = 1mm
    }
    [$x_1$
        [$x_2$, edge label = {node [midway, fill=white] {$0$} }
            [$1$, edge label = {node [midway, fill=white] {$0$} }]
            [$\$$, edge label = {node [midway, fill=white] {$1$} }
                [$1$, edge label = {node [midway, fill=white] {$0.2$} }]
                [$0$, edge label = {node [midway, fill=white] {$0.8$} }]
            ]        ]
        [$\$$, edge label = {node [midway, fill=white] {$1$} }
            [$x_3$, edge label = {node [midway, fill=white] {$0.7$} }
                [$0$, edge label = {node [midway, fill=white] {$0$} }]
                [$1$, edge label = {node [midway, fill=white] {$1$} }]
            ]
            [$1$, triangle, edge label = {node [midway, fill=white] {$0.3$} }]
        ]
    ]
    \end{forest}
    \end{gathered}}
\end{align*}
\caption{A stochastic decision tree with two stochastic nodes.}
\label{fig:SDT} 
\end{figure}
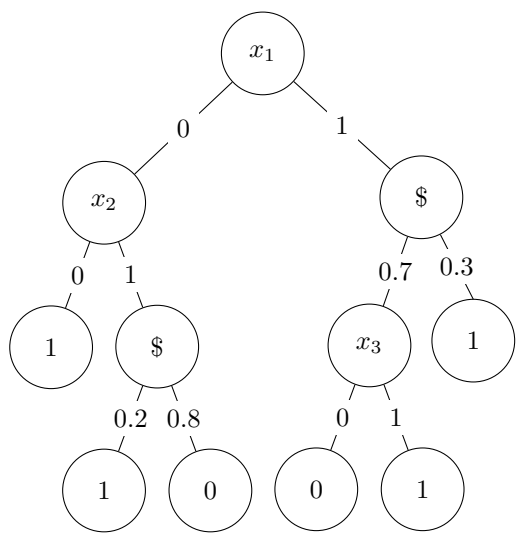


Many real-world learning scenarios are inherently  stochastic in nature, and relatedly, much of current research in learning theory focuses on the ``probabilistic concept'' generalization~\cite{KS94} of the standard PAC model of learning deterministic concepts (e.g.~see~\cite{GKM18,GK19,GGJKK20,GGK20} for an ongoing line of work on learning neural networks in the probabilistic concept model).  As discussed in~\cite{KS94}, probabilistic concepts can also be viewed as latent variable models, where the uncertainty concerning latent variables is modeled as apparent probabilistic behavior.

Stochastic decision trees are a simple and natural way to represent stochastic functions.  Despite compelling theoretical and practical motivations, there has thus far been considerably less attention on the problem learning stochastic decision trees as compared to deterministic decision trees.  Many basic questions remain open; for example: 
\begin{itemize}
\item[$\circ$]  Is there an algorithm for {\sl properly} learning stochastic decision trees, one that returns a decision tree hypothesis? 
\item[$\circ$] Is there an algorithm for learning stochastic decision trees that is resilient to {\sl adversarial noise}? 
\end{itemize} 

These questions have been intensively studied in the case of deterministic decision trees, and the algorithms and techniques developed to answer them (e.g.~\cite{EH89,KKMS08,GKK08}) have become foundational results in learning theory.   A broad goal of our work is to help bring the state of our understanding of learning stochastic decision trees into closer alignment with that of deterministic decision trees. 

\subsection{Our results} 
We give new algorithms for learning stochastic decision trees under the uniform distribution.  En route to our main result, we give the first algorithm for {\sl properly} learning stochastic decision trees---our algorithm in fact returns a deterministic decision tree hypothesis: 

\begin{theorem}[Properly learning stochastic decision trees] 
\label{thm:noiseless} 
There is an algorithm $\mathcal{A}$ with the following guarantee.  For all $\eps\in (0,1)$ and $s \in \N$, given access to labeled samples $(\bx,\bT(\bx))$ where $\bT : \zo^n \to \zo$ is a size-$s$ stochastic decision tree and $\bx$ is uniform random, $\mathcal{A}$ runs in $n^{O(\log(s/\eps)/\eps^2)}$ time and with high probability outputs a deterministic decision tree $h$ such that $\Pr[h(\bx)\ne \bT(\bx)] \le \opt + \eps$, where $\opt$ denotes the Bayes optimal error for $\bT$. \end{theorem}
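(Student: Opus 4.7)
The plan is to build a decision tree top-down via greedy impurity minimization, in the spirit of CART/ID3 and the analytic framework developed in \cite{BLT-ICML,BLT-ITCS,BGLT-NeurIPS1}. First, let $p(x) := \Pr[\bT(x) = 1]$, where the probability is over the internal coin flips of $\bT$; because $\bT$ is a size-$s$ stochastic decision tree, $p$ is computable by a \emph{real-valued} decision tree of size at most $s$ (collapse every maximal subtree of stochastic nodes into a single leaf labeled by its expected value). The Bayes optimal predictor is $g(x) = \mathbf{1}[p(x) \ge \tfrac12]$ with error $\opt = \E_{\bx}[\min(p(\bx), 1-p(\bx))]$, and a direct calculation shows that the excess error of any deterministic hypothesis $h$ equals $\E_{\bx}\bigl[\lvert 2p(\bx)-1\rvert \cdot \mathbf{1}[h(\bx) \ne g(\bx)]\bigr]$. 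It therefore suffices to build a deterministic decision tree whose labels agree with $\round \circ p$ except on a set of small margin-weighted measure.

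The algorithm greedily expands one ``open'' leaf at a time, identified with its cell (partial assignment) $C$. At each open leaf it estimates $\hat p_C \approx \E[\bT(\bx) \mid \bx \in C]$ from the samples falling in $C$, and analogously $\hat p_{C \cup \{x_i = b\}}$ for each candidate split variable $x_i$ and bit $b$. If the depth of $C$ reaches $d = \Theta(\log(s/\eps)/\eps^2)$ or the estimated impurity $\hat p_C(1-\hat p_C)$ falls below a chosen threshold, the leaf is closed and labeled with $\round(\hat p_C)$; otherwise the algorithm splits on the variable $x_i$ maximizing the empirical drop in a concave impurity function such as Gini $\mathcal{G}(q) = q(1-q)$.

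The heart of the analysis is an impurity-drop lemma: for any size-$\le s$ real-valued decision tree and any cell $C$ on which $\mathrm{Var}[p\mid C]$ is non-negligible, some variable yields an impurity drop of at least (roughly) $\mathrm{Var}[p\mid C]/O(\log s)$ when split on. This exploits the fact that $p$ restricted to $C$ is still a size-$\le s$ real-valued decision tree, and any such function has a relevant variable across which its conditional mean shifts substantially. Iterating this bound with a standard potential-function argument (averaging cell impurities weighted by their probability mass) drives the \emph{average} leaf impurity below $O(\eps^2)$ after $d = O(\log(s/\eps)/\eps^2)$ levels. A Cauchy--Schwarz step then converts this into excess error at most $\eps$ over $\opt$, using the pointwise bound $(\text{excess error from cell }C) \lesssim \mu(C)\,\mathrm{Var}[p\mid C]^{1/2}$ that follows from the margin identity above. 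Since the tree has depth $d$, the algorithm runs in time $n^{O(d)} = n^{O(\log(s/\eps)/\eps^2)}$, and a union bound over its at most $(2n)^d$ possible cells combined with Chernoff concentration shows $\mathrm{poly}(n^d, 1/\eps)$ samples suffice to make every empirical estimate sharp enough.

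The main obstacle is the impurity-drop lemma in the presence of stochastic labels: the algorithm sees only the noisy samples $\bT(\bx) \in \zo$ rather than $p(\bx)$, so one must cleanly separate the ``signal'' variance of $p$ on a cell from the ``noise'' contribution $p(1-p)$ of the Bernoulli randomness when interpreting what a Gini-style statistic actually measures, and argue that the greedy choice based on the empirical statistic still tracks the right underlying split. A secondary subtlety is that cells at depth $d$ contain only a $\approx 2^{-d}$ fraction of the data, so guaranteeing simultaneous accuracy of all empirical quantities at depth $\Theta(\log(s/\eps)/\eps^2)$ is precisely what forces a $1/\eps^2$ in the exponent rather than a $1/\eps$.
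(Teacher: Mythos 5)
Your proposal takes a genuinely different route from the paper's, but it has a fundamental gap: the impurity-drop lemma you invoke is false. The paper's algorithm is \emph{not} greedy top-down. It runs a recursive backtracking procedure \textsc{Find} that exhaustively searches over all depth-$d$ trees, returning the one of minimum empirical error (a dynamic-programming / Occam-style argument, following Mehta--Raghavan). The structural ingredient that makes this work is a lemma showing that any size-$s$ stochastic DT is $\eps$-close in $L_1$ to a \emph{stochastic-leaf} DT of size $s^{O(1/\eps^2)}$ (obtained by stacking $1/\eps^2$ independent ``derandomized'' copies of the tree); rounding the leaves and truncating to depth $\log(S/\eps) = O(\log(s/\eps)/\eps^2)$ then yields a deterministic target with error $\opt + O(\eps)$, which \textsc{Find} is guaranteed to match up to generalization error.

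The step that would fail in your argument is the claim that for any size-$\le s$ real-valued decision tree and any cell $C$ with $\mathrm{Var}[p \mid C]$ non-negligible, some variable yields an impurity drop of order $\mathrm{Var}[p \mid C]/O(\log s)$. Parity is a counterexample: let $p$ be the parity of $x_1, \dots, x_{\log s}$, which is computed by a size-$s$ decision tree with $\mathrm{Var}[p] = 1/4$, yet $\E[p \mid x_i = 0] = \E[p \mid x_i = 1] = 1/2$ for every $i$, so Gini (or any concave impurity) drops by exactly $0$ at every split. The same pathology survives on every cell $C$ that does not fix all of $x_1,\dots,x_{\log s}$. This is precisely why \cite{BLT-ICML,BLT-ITCS} restrict to monotone targets, and why \cite{BGLT-NeurIPS1} has to replace first-order impurity with a higher-order splitting criterion (and even then obtains weaker guarantees than $\opt + \eps$ in depth $O(\log(s/\eps)/\eps^2)$). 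No first-order greedy criterion can achieve the stated theorem. Separating the ``signal'' variance of $p$ from the Bernoulli noise $p(1-p)$, which you flag as the main difficulty, is a real issue but is secondary to this; the approach breaks already for deterministic parities in the noiseless setting, where $\opt = 0$. You would need to either switch to an exhaustive/backtracking search as the paper does, or design and analyze a splitting criterion that detects parities, which is a substantial change to the argument.
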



\Cref{thm:noiseless} is a special case of our main result, which gives a generalization of the algorithm $\mathcal{A}$ of~\Cref{thm:noiseless}  that is optimally resilient to adversarial noise.  

\begin{definition}[$\eta$-corrupted samples; ``nasty noise''~\cite{BEK02}] 
Let $\boldf : \zo^n \to \zo$ be a stochastic function.  We say that $\mathcal{S}$ is an {\sl $\eta$-corrupted set of uniform random samples labeled by $\boldf$} if it is formed in the following fashion: draw a set of labeled samples $(\bx,\boldf(\bx))$ where $\bx$ is uniform random, and modify any $\eta$ fraction to form $\mathcal{S}$. 
\end{definition} 

We allow for corruptions of both the example (i.e.~changing $\bx$ to a different $\bx'$) and its label (i.e.~flipping $\boldf(\bx)$), and note that the adversarial choice of which $\eta$ fraction of samples to corrupt can be adaptive, depending arbitrarily on the original uncorrupted set of samples.   This is regarded as the most challenging noise model for classification problems; weaker noise models include random classification noise, Massart noise, and agnostic noise. 

 Our main result is as follows: 

\begin{theorem}[Our main result: Properly learning stochastic decision trees in the presence of adversarial noise] 
\label{thm:main}
There is an algorithm $\mathcal{A}$ with the following guarantee.  For all $\eps, \eta \in (0,1)$ and $s \in \N$, given access to a sufficiently large $\eta$-corrupted set $\mathcal{S}$ of uniform random samples labeled by a size-$s$ stochastic decision tree $\bT : \zo^n \to \zo$, $\mathcal{A}$ runs in $n^{O(\log(s/\eps)/\eps^2)}$ time and with high probability outputs a decision tree hypothesis $h$ such that $\Pr[h(\bx)\ne \bT(\bx)] \le \opt + 2\eta + \eps$, where $\opt$ denotes the Bayes optimal error for $\bT$.
\end{theorem}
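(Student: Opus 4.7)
\pparagraph{Proof proposal.} The plan is to adapt the algorithm $\mathcal{A}$ of \Cref{thm:noiseless} so that it runs on an $\eta$-corrupted sample set rather than on clean samples, and to argue that the degradation in accuracy is exactly $2\eta$. This is a robustness analysis rather than a fundamentally new algorithm. The underlying reason to expect it to work is that every decision the noiseless algorithm makes is based on statistical estimates of the form $\Pr[\bT(\bx)=1 \mid \bx \in \rho]$ (or low-degree Fourier correlations computed under a restriction $\rho$), and any such quantity shifts by at most an additive $\eta/\mu(\rho)$ under nasty $\eta$-corruption, where $\mu(\rho)$ is the uniform measure of the subcube $\rho$.

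I would first revisit the noiseless algorithm and catalogue the two kinds of empirical quantities it queries: (a) at an internal restriction $\rho$, the quantities used to score and select a splitting variable $x_i$; and (b) at each leaf $\rho$, the majority-label estimate $\widehat{b}_\rho = \mathbf{1}[\widehat{p}_\rho \ge \tfrac12]$, where $\widehat{p}_\rho$ estimates $p_\rho := \Pr[\bT(\bx)=1 \mid \bx\in \rho]$. Since the runtime is $n^{O(\log(s/\eps)/\eps^2)}$, every leaf has mass at least $(\eps/s)^{O(1)}$, so combining standard uniform-convergence bounds with the observation above yields $|\widehat{p}_\rho - p_\rho| \le \eta/\mu(\rho) + \eps/4$ simultaneously across all restrictions the algorithm examines. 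In particular, the internal splitting decisions, which in the noiseless analysis carry $\Omega(\eps)$ slack, still make the correct progress up to a constant-factor slowdown absorbed into the $\eps^2$ in the exponent.

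The heart of the proof is the leaf-by-leaf error analysis that produces the tight $2\eta$ factor. Fix a leaf $\rho$. The Bayes-optimal prediction is the majority bit and its contribution to $\opt$ is $\mu(\rho)\cdot\min(p_\rho, 1-p_\rho)$; summed over leaves this recovers $\opt$ up to an additive $\eps$ from the finite-tree approximation guaranteed by \Cref{thm:noiseless}. Under $\eta$-corruption the estimate $\widehat{p}_\rho$ is off by at most $\eta/\mu(\rho)$, so the learned leaf label $\widehat{b}_\rho$ disagrees with the Bayes label only when $|p_\rho - \tfrac12| \le \eta/\mu(\rho)$; in that case the extra true error incurred at $\rho$ is $|1-2p_\rho| \le 2\eta/\mu(\rho)$. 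Weighting by $\mu(\rho)$ and summing over the leaves of the tree gives total additional error at most $2\eta$, and the leading $2$ is intrinsic: flipping the majority at a leaf replaces $\min(p_\rho,1-p_\rho)$ with $\max(p_\rho,1-p_\rho)$, a swing of twice the margin.

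The main obstacle I anticipate is the interaction between the noisy splitting phase and the noisy leaf-labeling phase. A splitting choice that differs from the noiseless algorithm's choice produces a tree with different leaves, so one cannot directly invoke the leaf-by-leaf calculation above against a fixed target tree. I expect the noiseless algorithm to be formulated around a quantitative progress measure (a variance or noise-sensitivity functional of the conditional label distribution under $\rho$) that decreases by a definite amount per split, and the proof here to show that an additive error of $\eta/\mu(\rho) + \eps/4$ in each estimate merely slows but does not halt that progress. The remaining tasks are routine: choose $|\mathcal{S}|$ large enough for the uniform-convergence bound to hold simultaneously over the $n^{O(\log(s/\eps)/\eps^2)}$ restrictions touched by $\mathcal{A}$, and combine the internal-node slack, the leaf approximation error $\eps$, and the $2\eta$ leaf-labeling loss to obtain the stated error $\opt + 2\eta + \eps$.
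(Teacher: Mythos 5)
Your proposal assumes the noiseless algorithm of~\Cref{thm:noiseless} is a greedy, split-and-estimate procedure that scores splitting variables at each restriction $\rho$ and labels leaves by thresholding an empirical mean $\widehat{p}_\rho$. That is not what the paper does. The algorithm is \textsc{Find} (\Cref{fig:find}), a backtracking search that enumerates \emph{every} decision tree of depth $d = O(\log(s/\eps)/\eps^2)$ and returns the one with minimum empirical 0/1 error on the given sample. Because \textsc{Find} is a global empirical risk minimizer, there is no ``splitting phase,'' no progress measure, no $\eta/\mu(\rho)$ estimation error to propagate, and no interaction between split selection and leaf labeling to worry about --- the very gap you flag at the end of your proposal (``I anticipate the interaction between the noisy splitting phase and the noisy leaf-labeling phase'') simply does not arise.

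Once the algorithm is ERM, the noise tolerance is nearly immediate and is captured by \Cref{fact:bound on corruption}: replacing an $\eta$ fraction of a sample changes the empirical average of any $[0,1]$-bounded loss by at most $\eta$. The paper applies it twice. First: $T^\star_{\trunc}$ from \Cref{cor:approx-stochastic-DT} has empirical error $\le \opt_\bT + 3\eps$ on the clean sample $\mathcal{S}^\circ$ (by generalization, \Cref{lem:generalization}), hence $\le \opt_\bT + 3\eps + \eta$ on the corrupted sample $\mathcal{S}$. Since \textsc{Find} minimizes empirical error on $\mathcal{S}$ over all depth-$d$ trees, its output $T^\star$ also satisfies $\Pr_{\mathcal{S}}[T^\star(\bx)\ne\by] \le \opt_\bT + 3\eps + \eta$. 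Second: transferring back, $\Pr_{\mathcal{S}^\circ}[T^\star(\bx)\ne\by] \le \opt_\bT + 3\eps + 2\eta$, and generalization gives the same bound on true error. No leaf-by-leaf accounting, no per-restriction concentration, no claim about individual $\widehat{p}_\rho$.

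Your leaf-by-leaf calculation does arrive at the correct $2\eta$ constant and correctly intuits that the factor of $2$ is ``one $\eta$ in, one $\eta$ out,'' but it is anchored to a fixed tree structure that a greedy algorithm would not reliably produce under nasty noise, and you explicitly say you do not know how to close that gap. The missing idea is not a tighter progress measure or a sharper uniform-convergence bound; it is to make the learner a brute-force minimizer so that the whole question of which tree gets built under noise reduces to comparing two empirical error numbers. That architectural choice is what makes the $2\eta$ bound fall out for free.
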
 

An error of $\opt + 2\eta$ is the information-theoretic  minimum (see e.g.~\cite{BEK02}).  Prior to our work there were (improper) algorithms that achieved either $\opt + O(\sqrt{\eta}) + \eps$ or $2\hspace{.1em}\opt + 2\eta + \eps$, the low-degree algorithm of~\cite{LMN93} and the $L_1$ polynomial regression algorithm of~\cite{KKMS08} respectively, but not the information-theoretically optimal $\opt + 2\eta + \eps$.  This was the case even for weaker noise models such as label-only noise (i.e.~agnostic noise~\cite{Hau92,KSS94}).  In fact, the low-degree and $L_1$ polynomial regression algorithms are, in general, only known to be resilient to noise in the labels.

As our final contribution, we show that when applied in the context of decision tree learning, these algorithms are in fact resilient to noise in both the examples and their labels: 

\begin{theorem}[Noise-tolerant properties of the low-degree algorithm and $L_1$ polynomial regression]
\label{thm:polynomial regression}
For all $\eps, \eta \in (0,1)$ and $s \in \N$, given access to a sufficiently large $\eta$-corrupted set $\mathcal{S}$ of uniform random samples labeled by a size-$s$ stochastic decision tree $\bT : \zo^n \to \zo$, 
\begin{itemize}
    \item[$\circ$] the low-degree algorithm runs in time $n^{O(\log(s/\eps))}$ and with high probability outputs a hypothesis $h$ satisfying $\Pr[h(\bx)\ne \bT(\bx)]\le \opt + O(\sqrt{\eta}) + \eps$. 
    \item[$\circ$] the $L_1$ polynomial regression algorithm runs in time $n^{O(\log(s/\eps))}$ and with high probability outputs a stochastic hypothesis $\bh$ satisfying $\Pr[\bh(\bx)\ne\bT(\bx)]\le 2\hspace{.1em}\opt + 2\eta + \eps$.
\end{itemize}
\end{theorem}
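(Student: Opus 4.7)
First I would extend the Linial--Mansour--Nisan theorem to stochastic decision trees so that the Bayes mean function $p(x):=\Pr[\bT(x)=1]$ admits a low-degree polynomial approximator. Writing $\bT=\mathbb{E}_{\boldsymbol{\pi}}[T_{\boldsymbol{\pi}}]$ where $\boldsymbol{\pi}$ indexes the outcomes of the stochastic nodes and each $T_{\boldsymbol{\pi}}$ is a deterministic size-$s$ decision tree, we have $\hat p(S)=\mathbb{E}_{\boldsymbol{\pi}}[\hat T_{\boldsymbol{\pi}}(S)]$; Jensen's inequality on the Fourier coefficients combined with the LMN bound $\|T_{\boldsymbol{\pi}}-\Pi_d T_{\boldsymbol{\pi}}\|_2^2\le O(s/2^d)$ applied to each $T_{\boldsymbol{\pi}}$ gives $\|\Pi_d p-p\|_2^2\le O(s/2^d)\le \eps^2$ for $d=O(\log(s/\eps))$, where $\Pi_d$ denotes the degree-$d$ Fourier truncation.

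Second, I would treat both algorithms as empirical risk minimization over the class $\mathcal{P}_d$ of degree-$d$ polynomials whose range is constrained to $[0,1]$ (enforceable as a convex constraint---linear for the $L_1$ algorithm, quadratic for the $L_2$ one), with loss $\ell^{(2)}(P;x,y)=(P(x)-y)^2$ or $\ell^{(1)}(P;x,y)=|P(x)-y|$. Since $y\in\{0,1\}$ and $P\in[0,1]$, each loss is bounded by an absolute constant, so each of the $\eta m$ corrupted samples shifts the empirical loss by at most $O(1/m)$, giving the uniform bound $|\hat L_{\mathcal{S}}(P)-\hat L_{\mathcal{S}^{\mathrm{clean}}}(P)|\le O(\eta)$ for all $P\in\mathcal{P}_d$. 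Combined with standard uniform convergence on $\mathcal{P}_d$ at sample complexity $\mathrm{poly}(n^d/\eps)$, this shows the corrupted-sample minimizer $\hat P$ satisfies $L_{\mathcal{D}^{\mathrm{clean}}}(\hat P)\le L_{\mathcal{D}^{\mathrm{clean}}}(P)+O(\eta+\eps)$ for every $P\in\mathcal{P}_d$.

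Third, I would convert these regression bounds into classification. For the $L_2$ algorithm the bias--variance identity $L^{(2)}_{\mathcal{D}^{\mathrm{clean}}}(P)=\|P-p\|_2^2+\mathbb{E}[p(1-p)]$, applied with $P$ equal to (a clipped version of) $\Pi_d p\in\mathcal{P}_d$, yields $\|\hat P-p\|_2^2\le \eps^2+O(\eta)$, hence $\|\hat P-p\|_2\le \eps+O(\sqrt{\eta})$; the rounding inequality $\Pr[\round(\hat P(\bx))\ne \bT(\bx)]\le \opt+2\|\hat P-p\|_1\le \opt+2\|\hat P-p\|_2$ then gives the claimed $\opt+O(\sqrt{\eta})+\eps$. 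For the $L_1$ algorithm, using the identity $\Pr[\bh(\bx)\ne \bT(\bx)]=L^{(1)}_{\mathcal{D}^{\mathrm{clean}}}(\hat P)$ for the stochastic hypothesis $\bh(x)\sim\mathrm{Bern}(\hat P(x))$, together with the triangle-inequality bound $L^{(1)}_{\mathcal{D}^{\mathrm{clean}}}(\Pi_d p)\le \|\Pi_d p-p\|_1+\mathbb{E}|p-\by|\le \eps+2\,\opt$ (the second inequality using $p(1-p)\le \min(p,1-p)$), gives $\Pr[\bh(\bx)\ne \bT(\bx)]\le 2\,\opt+2\eta+\eps$.

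The main obstacle is the boundedness of the hypothesis class: the $O(\eta)$ loss-perturbation bound relies critically on $\|P\|_\infty\le 1$, and without it an adversary who concentrates all $\eta m$ corrupted samples at a single point $x^\ast$ can shift a single Fourier coefficient by $\Theta(\eta)$ and, for the raw Fourier estimator, inject an additive term $\eta\sum_{|S|\le d}\chi_S(x^\ast)\chi_S(\cdot)$ whose $L_2$ norm is $\Theta(\eta\cdot n^{d/2})\gg \sqrt{\eta}$, which would break the $O(\sqrt{\eta})$ guarantee. The fix is to impose the $[0,1]$ range constraint as part of the convex program that defines each algorithm; this preserves the approximation from Step~1 since clipping any $[0,1]$-valued target's approximator only decreases the $L_1$/$L_2$ error, and it makes the per-sample loss uniformly bounded so that the TV-Lipschitz argument in Step~2 applies.
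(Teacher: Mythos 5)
Your high-level architecture—find a bounded low-degree polynomial approximator of $\mu_\bT$, use its boundedness to control the per-sample loss perturbation from corruption, invoke uniform convergence, and then convert $L_2$/$L_1$ regression error to classification error—is exactly the paper's architecture. However, your Step~1 has a genuine gap: you obtain the low-degree approximator by Fourier truncation, $\Pi_d p$, and this is \emph{not} a $[0,1]$-bounded function. You try to repair this by clipping ($\trunc(\Pi_d p)$), correctly noting that clipping can only decrease distance to the $[0,1]$-valued target $\mu_\bT$. But $\trunc(\Pi_d p)$ is no longer a polynomial, so it cannot serve as the comparator in your Step~2: empirical risk minimization over your class $\mathcal{P}_d$ only guarantees $\hat L_{\mathcal{S}}(\hat P)\le \hat L_{\mathcal{S}}(Q)$ for $Q\in\mathcal{P}_d$, and $\trunc(\Pi_d p)\notin\mathcal{P}_d$. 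Using the unclipped $\Pi_d p$ as comparator also fails, because the corruption bound $|\hat L_{\mathcal{S}}(Q)-\hat L_{\mathcal{S}^{\mathrm{clean}}}(Q)|\le O(\eta)$ requires the per-sample loss to be uniformly bounded, which $\Pi_d p$ does not guarantee (an adversary can place corrupted points where $\Pi_d p$ is large). So you genuinely need a comparator that is simultaneously a degree-$d$ polynomial \emph{and} pointwise $[0,1]$-bounded, and Fourier truncation does not provide one.

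The paper closes this gap with a different construction (Proposition~\ref{prop:mu-low-degree-bounded}): truncate the \emph{tree} rather than the Fourier spectrum. Let $\bT'$ be $\bT$ with all paths of deterministic depth greater than $\log(s/\eps)$ rerouted to a $0$-leaf. Then $\mu_{\bT'}(x)=\sum_{\ell}\Pr[x\text{ reaches }\ell]\cdot(\text{label of }\ell)$, where the sum is over leaves at depth at most $\log(s/\eps)$, is a polynomial of degree at most $\log(s/\eps)$; and being a probability, it automatically lies in $[0,1]$; and $\Pr_\bx[\mu_{\bT'}(\bx)\neq\mu_\bT(\bx)]\le\eps$ since at most an $\eps/s$ fraction of inputs reach each of the at most $s$ deep leaves. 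With this $p\coloneqq\mu_{\bT'}$ as the comparator, the rest of your chain (ERM over unconstrained degree-$d$ polynomials, post-hoc truncation of the minimizer $p^*$, two applications of the $\eta$-perturbation bound, generalization for $\trunc(p^*)$, then Fact~\ref{fact:L1-to-Bayes} with Jensen for $L_2$, and the identity $\E|\mu_\bT-\bT|=2\E[\mu_\bT(1-\mu_\bT)]\le 2\,\opt$ for $L_1$) goes through essentially as you sketched.

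A secondary issue: your claim that the range constraint $P(\zo^n)\subseteq[0,1]$ is ``enforceable as a convex constraint'' is not computationally meaningful---it is $2^{n+1}$ linear constraints, and even deciding membership is intractable. The paper avoids this by optimizing over \emph{unconstrained} degree-$d$ polynomials and truncating only afterward; since the labels are in $\{0,1\}$, post-hoc truncation can only decrease the empirical loss of $p^*$, so no constraint on the hypothesis class is needed.
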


\subsubsection{Summary and comparison with existing algorithms} The low-degree algorithm of Linial, Mansour, and Nisan~\cite{LMN93} and a recent algorithm of Chen and Moitra~\cite{CM19} for learning mixtures of subcubes can both be used to learn stochastic decision trees as a special case of their main results.  The algorithm of~\cite{LMN93} runs in time $n^{O(\log(s/\eps))}$, whereas the algorithm of~\cite{CM19} runs in time $O_s(1) \cdot n^{O(\log s)}\cdot \poly(1/\eps)$.  However, neither of these algorithms returns a decision tree hypothesis, and hence both are improper when applied in this context.  The classic algorithm of Ehrenfeucht and Haussler~\cite{EH89,Blu92} properly learns deterministic decision trees in time $n^{O(\log s)}\cdot \poly(1/\eps)$.  However, being an Occam algorithm, its analysis seems fundamentally unable to accommodate stochasticity of the target concept. 

\Cref{prior:Gaussian} summarizes our contributions and places them in the context of prior work. 

\vspace{5pt}
\begin{table}[h!]
\begin{adjustwidth}{-1in}{-1in}
  \captionsetup{width=.72\linewidth}
\renewcommand{\arraystretch}{1.8}
\centering
\begin{tabular}{|c|c|c|c|}
\hline
~~Reference~~   & Technique & Running time & Error guarantee \\ [.2em] \hline \hline
\multirow{2}{*}{\cite{LMN93}} & \multirow{2}{*}{Low-degree algorithm}  & \multirow{2}{*}{$n^{O(\log(s/\eps))}$} &  $\opt + O(\sqrt{\eta}) + \eps$ \\ [-.8em]
 &  & & (\small{{\bf This work}}) \\ [.2em] \hline 
 \multirow{2}{*}{\cite{KKMS08}} & \multirow{2}{*}{$L_1$ polynomial regression} & \multirow{2}{*}{$n^{O(\log(s/\eps))}$}  & $2\hspace{.1em}\opt + 2\eta + \eps$ \\ [-.8em] 
  & & & (\small{{\bf This work}}) \\ [.2em] \hline 
 \multirow{2}{*}{\cite{CM19}} & \multirow{2}{*}{Learning mixtures of subcubes} & \multirow{2}{*}{~~$O_s(1) \cdot n^{O(\log s)} \cdot \poly(\lfrac1{\eps})$~~} & $\opt + \eps$ \\ [-.8em]  
&  & & ~~\small{Noiseless setting ($\eta = 0$)}~~  \\   [.2em] \hline \hline 
\multirow{2}{*}{\bf This work}  & \small{Approximation by stochastic-leaf DTs;} &  \multirow{2}{*}{$n^{O(\log(s/\eps)/\eps^2)}$} & \multirow{2}{*}{$\opt + 2\eta + \eps$} \\ [-.9em] 
& \small{~Noise-tolerant learning of stochastic-leaf DTs~} & & \\ [.2em] \hline 
\end{tabular}\vspace{5pt} 
\end{adjustwidth}
\caption{Performance guarantees of our algorithm and existing algorithms for learning stochastic decision trees in the presence of adversarial noise.  Among these algorithms, ours is the only one that returns a decision tree hypothesis.  
Prior to our work, the error guarantees for the low-degree algorithm and $L_1$ polynomial regression were only known for label noise; we show in the context of decision tree learning, these guarantees can be strengthened to allow for noise in both the examples and labels.}
\vspace{-15pt}
\label{prior:Gaussian}\end{table}

\subsection{Our techniques} 

Our approach to~\Cref{thm:noiseless,thm:main} is simple and has two main conceptual parts: a structural lemma concerning stochastic decision trees and a noise-tolerant algorithm for learning a special type of stochastic decision tree.  

\begin{itemize} 
\item[$\circ$] {\sl Structural lemma:}  We show that every size-$s$ stochastic decision tree can be $\eps$-approximated by a ``stochastic-leaf decision tree'' of size $s^{O(1/\eps^2)}$.  A stochastic-leaf decision tree is a very specific type of stochastic decision tree, one whose stochastic nodes only occur at its leaves.
\end{itemize}
This lemma reduces the task of learning stochastic decision trees to that of learning stochastic-leaf decision trees, with a catch: due to the approximation error incurred, the algorithm for learning stochastic-leaf decision trees has to be {\sl noise-tolerant}. 
\begin{itemize} 
\item[$\circ$] {\sl Noise-tolerant learning stochastic-leaf decision trees:}  Mehta and Raghavan~\cite{MR02}  gave an algorithm for properly learning deterministic decision trees in the noiseless setting.  We show that their algorithm can be generalized to handle stochastic-leaf decision trees, and furthermore, we show that our generalization is optimally resilient to adversarial noise.  This stands in contrast to the algorithm of Ehrenfeucht and Haussler~\cite{EH89}, which as mentioned above seems fundamentally unable to accommodate either stochasticity or noise. 


\end{itemize} 

We are hopeful that each of these two parts will see further utility in problems involving stochastic decision trees, beyond the learning-theoretic setting that is the focus of this work. 

As for~\Cref{thm:polynomial regression}, the low-degree algorithm and $L_1$ polynomial regression are versatile and powerful ``meta-algorithms'' in learning, but they are not generally known to handle the challenging nasty noise. Our key observation here is that the mean functions of stochastic decision trees are well-approximated by low-degree polynomials {\sl with bounded outputs}. We then show that when run on such polynomials, the low-degree algorithm and $L_1$ polynomial regression are in fact resilient to nasty noise.  Given the broad applicability of both algorithms, we are similarly hopeful that this fact will be of independent interest beyond decision trees. 

\subsection{Preliminaries} 
\label{sec:prelims} 
Let $\boldf : \zo^n \to \zo$ be a  stochastic function.  We associate $\boldf$ with its {\sl  mean function} $\mu_\boldf : \zo^n \to [0,1]$, 
$ \mu_\boldf(x) \coloneqq \Prx_{\boldf} [\boldf(x) = 1].$ 
The {\sl Bayes optimal classifier} for $\boldf$ is the (deterministic) function $x \mapsto \round(\mu_\boldf(x)),$ where $\round(t) \coloneqq \Ind[t \ge \frac1{2}]$.
Given two stochastic functions $\boldf,\bh : \zo^n \to \zo$, we define 
\[ \error_\boldf(\bh) \coloneqq \Ex_{\bx}\Big[ \Prx_{\boldf,\bh}[\boldf(\bx) \ne \bh(\bx)]\Big], \] 
where here and throughout this paper, $\bx$ denotes a uniform random input from $\zo^n$.  We define $\opt_\boldf \coloneqq \error_\boldf(\round(\mu_\boldf))$, and  
when $\boldf$ is clear from context, we simply write $\opt$. 
\begin{fact}[Bayes optimal classifier minimizes classification error]
For all stochastic functions $\boldf, \bh : \zo^n \to \zo$, we have $\error_\boldf(\bh) \ge \opt_\boldf.$
\end{fact}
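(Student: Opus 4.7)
The plan is to establish the inequality pointwise, i.e., for every fixed $x \in \zo^n$, and then integrate. That is, I would prove
\[
\Prx_{\boldf,\bh}[\boldf(x) \ne \bh(x)] \;\ge\; \Prx_{\boldf}[\boldf(x) \ne \round(\mu_\boldf(x))]
\]
and take $\E_\bx$ of both sides to conclude $\error_\boldf(\bh) \ge \opt_\boldf$. The expectation on the right is $\error_\boldf(\round(\mu_\boldf)) = \opt_\boldf$ by definition.

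To prove the pointwise bound, fix $x$ and set $p \coloneqq \mu_\boldf(x) = \Pr[\boldf(x) = 1]$ and $q \coloneqq \Pr[\bh(x) = 1]$, where $\boldf$ and $\bh$ are assumed (as usual for stochastic hypotheses) to use independent randomness. Then
\[
\Prx_{\boldf,\bh}[\boldf(x) \ne \bh(x)] = p(1-q) + (1-p)q = p + (1-2p)\,q.
\]
Viewing this as an affine function of $q \in [0,1]$, it is minimized at $q=1$ if $p \ge \tfrac12$ and at $q=0$ if $p < \tfrac12$, with minimum value $\min(p,1-p)$. This minimum is exactly achieved by the deterministic predictor $\round(p) = \Ind[p \ge \tfrac12]$: when $p \ge \tfrac12$ the predictor outputs $1$ and errs with probability $1-p$, and when $p < \tfrac12$ it outputs $0$ and errs with probability $p$. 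Hence the pointwise inequality holds.

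There is essentially no obstacle here; the only conceptual point worth being careful about is that the randomness of $\bh$ is independent of that of $\boldf$, which is what lets us factor the joint disagreement probability into the simple bilinear expression $p(1-q) + (1-p)q$. Averaging the pointwise inequality over the uniform $\bx$ then yields $\error_\boldf(\bh) \ge \opt_\boldf$.
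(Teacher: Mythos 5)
Your proof is correct, and it is the natural argument: condition on $x$, use independence of the randomness in $\boldf$ and $\bh$ to write the pointwise disagreement probability as the affine function $p + (1-2p)q$ of $q$, observe its minimum over $q \in [0,1]$ is $\min(p, 1-p)$, note this is exactly the error of $\round(p)$, and average over $\bx$. The paper states this as a Fact without proof, so there is no argument to compare against; you supply the standard one, and you are right to flag the independence of $\boldf$ and $\bh$ as the only place where care is needed (without it, taking $\bh$ perfectly correlated with $\boldf$ would falsify the claim).
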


\begin{fact}[$L_1$-error and Bayes optimality] 
\label{fact:L1-to-Bayes} 
Let $\boldf : \zo^n \to \zo$ be a stochastic function. For any $h : \zo^n \to [0,1]$,
\[ \Pr[\round(h(\bx)) \ne \boldf(\bx)] \le \opt_\boldf + 2\E[|\mu_\boldf(\bx)-h(\bx)|]. \] 
\end{fact}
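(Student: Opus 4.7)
The plan is to reduce this to a pointwise inequality on $x \in \zo^n$ and then take expectations. Fix $x$ and write $p = \mu_\boldf(x)$ and let $g(x) = \round(h(x)) \in \{0,1\}$. The conditional error of $g$ at $x$ is
\[
\Prx_\boldf[g(x) \ne \boldf(x)] = g(x)(1-p) + (1-g(x))\,p,
\]
while the Bayes optimal error at $x$ is $\min(p, 1-p)$. So the pointwise overhead of $g$ over the Bayes classifier is $0$ when $g(x) = \round(p)$ and exactly $|2p - 1|$ when $g(x) \ne \round(p)$.

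First I would establish the key pointwise claim: whenever $\round(h(x)) \ne \round(p)$, we have $|2p-1| \le 2|h(x) - p|$. This is because $\round(h(x)) \ne \round(p)$ forces $h(x)$ and $p$ to lie on opposite sides of $\frac12$, so $|h(x) - p| \ge |p - \tfrac12| = \tfrac12 |2p-1|$. Hence in all cases,
\[
\Prx_\boldf[g(x) \ne \boldf(x)] - \min(p, 1-p) \;\le\; 2\,|h(x) - p|.
\]

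Taking expectations over $\bx$, the left side becomes $\Pr[\round(h(\bx)) \ne \boldf(\bx)] - \opt_\boldf$ by the definitions of $\error_\boldf$ and $\opt_\boldf$, and the right side becomes $2\,\E[|h(\bx) - \mu_\boldf(\bx)|]$. Rearranging yields the desired bound. The only nontrivial step is the pointwise inequality relating the ``miss cost'' $|2p-1|$ to the $L_1$ gap $|h(x) - p|$, and this follows immediately from the opposite-sides observation above, so I do not anticipate a substantive obstacle.
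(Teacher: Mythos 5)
Your proof is correct and follows essentially the same route as the paper's: both express the pointwise excess error over the Bayes classifier as $|2\mu_\boldf(x)-1|$ times the indicator that $\round(h(x)) \ne \round(\mu_\boldf(x))$, then use the ``opposite sides of $\tfrac12$'' observation to bound this by $2|h(x)-\mu_\boldf(x)|$ before taking expectations. The paper phrases the intermediate step via the identity $\Pr_\boldf[\boldf(x) \ne g(x)] = |\mu_\boldf(x) - g(x)|$ for binary $g$, but the decomposition and the key pointwise inequality are identical to yours.
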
 
\Cref{fact:L1-to-Bayes} states that if we have a function close to $\mu_{\boldf}$, we can convert it to a classifier with error close to $\opt_{\boldf}$.
\begin{proof}
    We need to upper bound $\error_{\boldf}(\round \circ h)  - \BayesOpt_{\boldf}$ at $2\E[|\mu_\boldf(\bx)-h(\bx)|] $. We rewrite that quantity as
    \begin{align*}
        \error_{\boldf}(\round \circ h)  - \BayesOpt_{\boldf} &= \Prx_{\bx \sim \zo^n}\left[\boldf(\bx) \neq \round(h(\bx)) \right] - \Prx_{\bx \sim \zo^n}\left[
        \boldf(x) \neq \round(\mu_\boldf(\bx)) \right] \\
        &= \Ex_{\bx \sim \zo^n}\left[|\,\mu_{\boldf}(\bx) -  \round(h(\bx)) \,|  - 
        |\,\mu_{\boldf}(\bx) - \round(\mu_\boldf(\bx))\,| \right] \\
        &= \Ex_{\bx \sim \zo^n}\left[\Ind(\round(h(\bx)) ) \neq \round(\mu_\boldf(\bx))) \cdot 2 \cdot 
        \left|\,\mu_{\boldf}(\bx) - \lfrac{1}{2}\,\right| \right]
    \end{align*}
     It is only possible that $\round(h(x)) \neq \round(\mu_\boldf(x))$ if $|\,f(x) -\mu_\boldf(x)\,| \geq  |\,\mu_\boldf(x) - \frac{1}{2}\,|$. Therefore,
    \begin{align*}
        \error_{\boldf}(\round \circ h)  - \BayesOpt_{\boldf} &\leq \Ex_{\bx \sim \zo^n}\left[\Ind(|\,f(x) -\mu_\boldf(x)\,| \geq  |\,\mu_\boldf(x) - \frac{1}{2}\,|) \cdot 2 \cdot 
        \left|\,\mu_{\boldf}(\bx) - \lfrac{1}{2}\,\right| \right] \\
        &\leq 2 \Ex_{\bx \sim \zo^n}\left[|\,f(x) -\mu_\boldf(x)\,| \right]. \qedhere
    \end{align*}
\end{proof}

\section{Approximating stochastic DTs with stochastic-leaf DTs} 
\label{sec:stochastic leaf}
\begin{definition}[Stochastic-leaf DT]
A stochastic-leaf DT is a stochastic DT for which all stochastic nodes have only leaves as their children. 
\end{definition}

\begin{lemma}[Approximating stochastic DTs with stochastic-leaf DTs]
\label{lem:approx-stochastic-DT}
Let $\bT$ be a size-$s$ stochastic DT.  For every $\eps \in (0,\frac1{2})$, there is a size-$S${\emph{ stochastic-leaf}} DT $\overline{\bT}$ such that $S \le s^{O(1/\eps^2)}$ and 
$ \Ex_{\bx}[|\mu_{\bT}(\bx)-\mu_{\overline{\bT}}(\bx)|] \le \eps.$\end{lemma}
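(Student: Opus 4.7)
The plan is to view $\bT$ as a distribution over deterministic DTs of size at most $s$, and then approximate its mean via the empirical average of $O(1/\eps^2)$ samples. Concretely, for each stochastic node of $\bT$, I sample its child according to its Bernoulli parameter and ``collapse'' the node to that child. The result is a random deterministic DT $\bT^{\ast}$ with at most $s$ leaves. Since only the stochastic coin flips along the actually-taken path affect $\bT(x)$, it is easy to verify that for every $x \in \zo^n$,
\[ \mu_\bT(x) \;=\; \E_{\bT^{\ast}}\!\big[\bT^{\ast}(x)\big]. \]

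Next, draw $m = \lceil 1/(4\eps^2)\rceil$ independent copies $\bT^{\ast}_1, \ldots, \bT^{\ast}_m$ and define $\bar{f} := \frac{1}{m}\sum_{i=1}^m \bT^{\ast}_i$. For each fixed $x$, $\bar{f}(x)$ is the mean of $m$ i.i.d.~$\zo$-valued random variables with expectation $\mu_\bT(x)$, so $\Var_{\bT^\ast_i\text{'s}}[\bar{f}(x)] \le \tfrac{1}{4m}$ and hence $\E_{\bT^\ast_i\text{'s}}[\,|\bar{f}(x) - \mu_\bT(x)|\,] \le \tfrac{1}{2\sqrt{m}}$. Averaging over $x$ and applying Fubini, there is a specific choice of $\bT^{\ast}_1, \ldots, \bT^{\ast}_m$ for which $\E_{\bx}[|\bar{f}(\bx) - \mu_\bT(\bx)|] \le \tfrac{1}{2\sqrt{m}} \le \eps$.

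Finally, I realize $\bar f$ by a stochastic-leaf DT. The function $\bar{f}$ is computable by a deterministic DT obtained by serially composing the $m$ trees: run $\bT^{\ast}_1$; at each of its leaves, continue with $\bT^{\ast}_2$ (skipping already-queried variables); and so on. A straightforward induction shows this composed tree has at most $s^m = s^{O(1/\eps^2)}$ leaves, and at each leaf $\bar{f}$ takes a constant value $p \in \{0, 1/m, 2/m, \ldots, 1\}$. Replacing each leaf with a single stochastic node that outputs $1$ with probability $p$ and $0$ with probability $1 - p$ yields a stochastic-leaf DT $\overline{\bT}$ of size $s^{O(1/\eps^2)}$ whose mean function satisfies $\mu_{\overline{\bT}} \equiv \bar{f}$, giving the desired $L_1$ bound.

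There is no significant obstacle: the proof is essentially a Monte-Carlo / probabilistic argument. The two places demanding a bit of care are (i) the identification $\mu_\bT = \E_{\bT^{\ast}}[\bT^{\ast}]$, which relies on the observation that off-path stochastic coin flips are irrelevant, and (ii) the $s^m$ size bound for serial composition of the $m$ sampled trees, which follows from the fact that any cell in the common refinement of $m$ partitions of $\zo^n$ into $\le s$ subcubes is again a subcube, and there are at most $s^m$ such cells.
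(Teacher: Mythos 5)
Your proof is correct and follows essentially the same route as the paper: sample $O(1/\eps^2)$ independent deterministic trees from the distribution induced by the stochastic nodes, average them (the variance-to-$L_1$ argument via Jensen is identical), fix a good realization by the probabilistic method, stack the trees, and replace each leaf with a Bernoulli leaf. If anything, your ``collapse each stochastic node to a random child'' framing is a touch cleaner than the paper's $\br \sim \zo^m$ notation, since it handles arbitrary Bernoulli parameters without any notational abuse.
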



\begin{proof}
Let $m$ denote the number of stochastic transitions in $\bT$. For a fixed $r \in \zo^m$, let $\bT(x, r)$ be the value of $\bT$ evaluated on $x$ with stochastic transitions determined by $r$. Suppose we pick random strings $\br_1,\ldots, \br_c \sim \zo^{m}$ independently and uniformly at random.  For each $x \in \zo^n$, consider the following random variable:
    \begin{align*}
        \mathbf{est}(x) &\coloneqq \Ex_{\bi \in [c]}[\bT(x, \br_{\bi})].
    \end{align*}
    Note that 
    \begin{align*} \E_{\br_1, \ldots, \br_c \in \zo^m}[\mathbf{est}(x)] &= \mu_\bT(x) = \Ex_{\br \sim\zo^m}[\bT(x,\br)] \\
    \Var[\mathbf{est}(x)] &= \lfrac{1}{c}\cdot \Varx_{\br\sim\zo^m}[\bT(x,\br)], 
    \end{align*} 
        where in both cases above, $\br \sim \zo^m$ on the RHS denotes $\br$ chosen uniformly at random from $\zo^m$. 
    Since $\bT$ is $\zo$-valued, it has variance at most $\frac{1}{4}$. Hence, the variance of $\mathbf{est}(x)$ is at most $\frac{1}{4c}$. If we take $c =1/\varepsilon^2$, the following holds for any $x \in \zo^n$:
\[        \Ex_{\br_1, \ldots, \br_c \sim \zo^m}\Big[\big(\mathbf{est}(x) - \mu_\bT(x)\big)^2\Big] \leq \frac{\varepsilon^2}{4}, \quad \text{and therefore}\ 
        \Ex_{\br_1, \ldots, \br_c \sim \zo^m}[|\mathbf{est}(x) - \mu_\bT(x)|] \leq \frac{\varepsilon}{2}.
\]     
Averaging over $\bx \sim \zo^n$ and swapping expectations, we get: 
    \begin{align*}
        \Ex_{\br_1, \ldots, \br_c \sim \zo^m}\bigg[\Ex_{\bx \sim \zo^n}\Big[|\mathbf{est}(x) - \mu_\bT(x)|\Big] \bigg] \leq \frac{\varepsilon}{2}.
    \end{align*}
Therefore, there must exist outcomes $r_1^{\star}, \ldots, r_c^{\star} \in \zo^m$ of $\br_1,\ldots,\br_c$ such that 
    \begin{align}
    \label{eq: existence small error}
        \Ex_{\bx \sim \zo^n}\bigg[\big|\Ex_{\bi \in [c]}[\bT(\bx, r_{\bi^\star})] - \mu_\bT(\bx)\big |\bigg] \leq \frac{\varepsilon}{2}.
    \end{align}
    For each $i \in [c]$, we define a size-$s$ DT by fixing the stochastic nodes of $\bT$ according to $r_i^\star \in \zo^m$. We define our stochastic-leaf DT $\overline{\bT}$ by stacking these $c$ many size-$s$ DTs on top of one another: for each $i < n$, we replace each leaf of the $i_{th}$ DT with a copy of the $(i+1)_{th}$ DT. Then for each leaf $\ell$ of this stacked tree, let $x_{\ell}$ be an input that is consistent with the root-to-$\ell$ path in $\overline{\bT}$. We replace $\ell$ with a stochastic node which transitions to a 1-leaf with probability $p_\ell := \Ex_{\bi \in [c]}[\bT(x_\ell, r_{\bi}^\star)]$, and to a 0-leaf with probability $1 - p_\ell$.  Note that for each $i \in [c]$, the tree $\bT(\cdot, r_i^\star)$ gives the same classification for all inputs reaching leaf $\ell$ of $\overline{\bT}$, so $p_\ell$ does not depend on the choice of $x_\ell$.
    
    $\overline{\bT}$ is a stochastic-leaf DT that computes $x \mapsto \Ex_{\bi \in [c]}[\bT(\bx, r_{\bi}^\star)]$, which by \Cref{eq: existence small error}, has sufficiently small error. Since this DT has size $s^c = s^{O(1/\eps^2)}$, the proof of~\Cref{lem:approx-stochastic-DT} is complete.
\end{proof}

\section{A simple backtracking algorithm for finding the optimal small-depth tree}

The algorithmic core of \Cref{thm:noiseless,thm:main} is a  recursive backtracking procedure {\sc Find} shown in \Cref{fig:find}, which takes a labeled set of samples $X$ and finds a depth-$d$ decision tree that achieves minimal classification error. This algorithm is inspired by and simplifies the {\sc Find} algorithm given by Mehta and Raghavan \cite{MR02} for building a minimum-error decision tree from any ``sat-countable representation'' of a function.

\begin{figure}[ht!]
  \captionsetup{width=.9\linewidth}
\begin{tcolorbox}[colback = white,arc=1mm, boxrule=0.25mm]
    \textsc{Find}$(X,d)$:
    \begin{enumerate}[align=left]
        \item[\textbf{Input:}]  Set $X$ of labeled examples $(x,y)$ and depth budget $d$.
        \item[\textbf{Output:}]  A depth-$d$ DT $T^\star$ that minimizes $\Prx_{(\bx,\by) \sim X}[T^{\star}(\bx) \ne \by]$ among all depth-$d$ DTs.
    \end{enumerate}
    \begin{enumerate}
        \item If $d = 0$, return the constant $c\in \{0,1\}$ that minimizes $\Prx_{(\bx,\by) \sim X}[c \ne \by]$.
        \item For every $i \in [n]$, let $T_i$ be the DT defined as follows: 
        \begin{itemize}
        \item[$\circ$] $T_i$ queries $x_i$ at the root; 
        \item[$\circ$] Has $\textsc{Find}(X_{x_i = 0}, d-1)$ as its left subtree;
        \item[$\circ$] Has $\textsc{Find}(X_{x_i = 1}, d-1)$ as its right subtree.
        \end{itemize}
Here $X_{x_i = b}$ denotes the subset of $X$ containing only examples where $x_i$ is set to $b$. 
        \item Return the tree $T_{i^\star}$ that minimizes $\Prx_{(\bx,\by) \in X}[T_{i}(\bx) \ne \by]$ among all $i \in [n]$.
    \end{enumerate} 
\end{tcolorbox} 
\caption{A recursive backtracking algorithm for finding a depth-$d$ DT of minimal classification error.} 
\label{fig:find}
\end{figure} 

\begin{lemma}[Correctness of \textsc{Find}]
    \label{lemma:find correctness}
    Consider any sample set $X$ of labeled examples $(x,y)$ and depth budget $d$. The algorithm \textsc{Find}$(X, d)$ of~\Cref{fig:find} returns a depth-$d$ DT $T^\star$ that minimizes $\Prx_{(\bx,\by) \sim X}[T^\star(\bx) \ne \by]$ among all depth-$d$ DTs.
\end{lemma}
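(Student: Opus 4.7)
The plan is to prove this by induction on the depth budget $d$, showing that \textsc{Find}$(X,d)$ achieves the minimum empirical error on $X$ over all depth-$d$ DTs. The structure is straightforward once one observes that the empirical error of a rooted DT decomposes additively over its two subtrees when we restrict the sample set according to the root query.

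For the base case $d=0$, a depth-$0$ DT is a constant in $\{0,1\}$, and the algorithm explicitly chooses the one that minimizes $\Prx_{(\bx,\by)\sim X}[c\ne \by]$, so correctness is immediate. For the inductive step, fix $d\ge 1$, and assume the claim holds for depth budget $d-1$ and for every sample set. Let $T^\star$ be any optimal depth-$d$ DT on $X$. If $T^\star$ is a constant, then the error achieved by \textsc{Find}$(X,d)$ is at most the error of this constant, since for any chosen root variable $x_i$ the optimal depth-$(d-1)$ subtrees (by the inductive hypothesis) can no worse than the constants that agree with $T^\star$ on each half. Otherwise $T^\star$ queries some variable $x_{i^\star}$ at the root, with depth-$(d-1)$ subtrees $L^\star$ (the ``$x_{i^\star}=0$'' branch) and $R^\star$ (the ``$x_{i^\star}=1$'' branch).

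The key calculation is the partition identity
\begin{align*}
\Prx_{(\bx,\by)\sim X}[T^\star(\bx)\ne \by]
=\ & \frac{|X_{x_{i^\star}=0}|}{|X|}\cdot \Prx_{(\bx,\by)\sim X_{x_{i^\star}=0}}[L^\star(\bx)\ne \by] \\
& + \frac{|X_{x_{i^\star}=1}|}{|X|}\cdot \Prx_{(\bx,\by)\sim X_{x_{i^\star}=1}}[R^\star(\bx)\ne \by].
\end{align*}
By the inductive hypothesis, $\textsc{Find}(X_{x_{i^\star}=0},d-1)$ has empirical error on $X_{x_{i^\star}=0}$ no larger than that of $L^\star$, and similarly for the $x_{i^\star}=1$ side. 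Hence the tree $T_{i^\star}$ constructed in Step~2 of the algorithm has empirical error on $X$ at most $\Prx_{(\bx,\by)\sim X}[T^\star(\bx)\ne \by]$. Since Step~3 returns the best $T_i$ over all $i \in [n]$, the output is optimal.

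I do not anticipate any real obstacle: the only subtlety is handling the fact that ``depth-$d$'' includes trees of depth less than $d$, which is why one needs a brief argument for the case that $T^\star$ is a constant (or more generally has depth less than $d$). This is absorbed cleanly by the inductive hypothesis, which guarantees that $\textsc{Find}(\cdot,d-1)$ produces a subtree at least as good as any constant (and more generally any depth-$(d-1)$ DT) on the restricted sample set.
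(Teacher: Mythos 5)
Your proof is correct and follows essentially the same inductive structure as the paper's: induct on the depth budget $d$, use the partition of $X$ into $X_{x_i=0}$ and $X_{x_i=1}$, and apply the inductive hypothesis to each restricted sample set. The one place you are more careful than the paper is the explicit handling of the case where an optimal depth-$d$ tree is a constant (or, more generally, has depth less than $d$): the paper's argument concludes by optimizing over trees $T_i$ that each query some $x_i$ at the root, implicitly relying on the fact that for $d \ge 1$ some optimal tree can always be taken to query a variable at the root; you spell out why (any $T_i$ with optimal depth-$(d-1)$ subtrees dominates any constant, since constants are themselves depth-$(d-1)$ trees on each half). This is a small but genuine tightening of the exposition rather than a different route.
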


\begin{proof}
    We proceed by induction on $d$. If $d = 0$, then $\textsc{Find}$ returns at Step 1 and is clearly correct.
    For the inductive step, suppose that $d \geq 1$. For any $i\in [n]$, we first claim that the tree $T_i$ defined in Step 2 is a depth $d$ DT that minimizes classification error with respect to $X$ among those that query $x_i$ at the root. Let $(T_i)_{\mathrm{left}}$ and $(T_i)_{\mathrm{right}}$ be its left and right subtrees respectively. By the inductive hypothesis, the left and right subtrees $(T_i)_{\mathrm{left}}$ and $(T_i)_{\mathrm{right}}$ are depth $d-1$ DTs that minimize error with respect to $X_{x_i=0}$ and $X_{x_i=1}$ respectively. Hence, $T_i$ is a depth $d$ DT that achieves minimal error with respect to $X$ among those that query $x_i$ at the root. 
    
    Since $\textsc{Find}$ returns the $T_{i^\star}$ that minimizes $\Prx_{(\bx, \by) \sim X}[T_i(\bx) \ne \by]$ among all $i \in [n]$ in Step 3, and each $T_i$ is a minimal-error depth-$d$ DT among those that query $x_i$ at the root, we conclude that $\textsc{Find}$ returns a tree of minimal error with respect to $X$.
\end{proof}

\begin{lemma}[Efficiency of {\sc Find}]
\label{lemma:find efficiency} 
    Consider any sample set $X$ of labeled examples and depth budget $d$. The algorithm $\textsc{Find}(X, d)$ of~\Cref{fig:find} takes time $n^{O(d)} \cdot O(|X|)$. 
\end{lemma}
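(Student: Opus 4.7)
The plan is to define $T(m,d)$ to be an upper bound on the worst-case running time of $\textsc{Find}(X,d)$ over all inputs with $|X|=m$, and prove by induction on $d$ that $T(m,d) \le C^{d} n^{d} m$ for a suitable absolute constant $C$. The conclusion of the lemma then follows immediately by reading off $n^{O(d)} \cdot O(|X|)$.

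For the base case $d=0$, Step 1 of \textsc{Find} simply scans through $X$ once, tallying the labels $y$, and returns the more frequent label, so $T(m,0) = O(m)$. For the inductive step, I would isolate the non-recursive work performed by a single invocation of $\textsc{Find}(X,d)$: namely, for each $i \in [n]$ we must partition $X$ into $X_{x_i=0}$ and $X_{x_i=1}$, and after the recursive calls return we must evaluate $\Pr_{(\bx,\by)\sim X}[T_i(\bx)\ne \by]$ for each $i$ and take the minimum. Both the partitioning and the error evaluation can be carried out in $O(m)$ time per $i$, for a total non-recursive cost of $O(nm)$. This yields the recurrence
\[
T(m,d) \;\le\; O(nm) \;+\; \sum_{i=1}^{n} \bigl( T(m_{i,0}, d-1) + T(m_{i,1}, d-1) \bigr),
\]
where $m_{i,0} \coloneqq |X_{x_i=0}|$ and $m_{i,1} \coloneqq |X_{x_i=1}|$ satisfy $m_{i,0} + m_{i,1} = m$ for every $i$.

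The key observation that makes the induction go through is exactly this additivity: although there are $2n$ recursive calls, within each coordinate $i$ the two recursive inputs partition $X$, so plugging in the inductive hypothesis $T(m',d-1) \le C^{d-1} n^{d-1} m'$ gives
\[
\sum_{i=1}^{n} \bigl( T(m_{i,0},d-1) + T(m_{i,1},d-1) \bigr) \;\le\; C^{d-1} n^{d-1} \sum_{i=1}^{n}(m_{i,0}+m_{i,1}) \;=\; C^{d-1} n^{d} m.
\]
Combining with the $O(nm)$ overhead, we get $T(m,d) \le O(nm) + C^{d-1} n^{d} m \le C^{d} n^{d} m$ once $C$ is chosen large enough to dominate the hidden constant in $O(nm)$ (for instance, any $C$ at least twice that constant suffices for all $d \ge 1$).

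There isn't really a hard step here: the only thing to be slightly careful about is confirming that the non-recursive per-call work is $O(nm)$ rather than something larger, and recognizing that the sum of input sizes across children of a single internal call is exactly $nm$ (one copy of $m$ for each coordinate $i$), not $2nm$ or worse. Once that bookkeeping is in place, the induction is immediate and delivers the claimed $n^{O(d)} \cdot O(|X|)$ bound.
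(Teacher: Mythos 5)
Your proof is correct and follows the same overall strategy as the paper's (induction on the depth budget via a recurrence), but your version is actually a slight refinement. The paper's recurrence is $T(d) \le 2n \cdot T(d-1) + O(nd|X|)$, which crudely bounds each of the $2n$ recursive calls by the running time on a sample of the \emph{full} size $|X|$, yielding $(2n)^d \cdot O(nd|X|)$. You instead parameterize by both $m=|X|$ and $d$ and exploit the fact that, for each fixed $i$, the two recursive inputs $X_{x_i=0}$ and $X_{x_i=1}$ partition $X$, so $\sum_i(m_{i,0}+m_{i,1})=nm$ rather than $2nm$; this gives the cleaner bound $C^d n^d m$. Both arrive at $n^{O(d)}\cdot O(|X|)$, so the difference is purely in tightness of constants, not in the claimed result.

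One small accounting slip: you assert that the Step~3 error evaluation costs $O(m)$ per coordinate $i$, but each $T_i$ has depth up to $d$, so evaluating $T_i$ on all $m$ examples costs $O(dm)$ per $i$, i.e.\ $O(ndm)$ total (the paper books this as $n\cdot d\cdot |X|$). This is harmless for your induction, since $d\le C^{d-1}$ for $C\ge 2$ and $d\ge 1$, so the $O(ndm)$ overhead is still absorbed by choosing $C$ a bit larger --- but you should either correct the per-$i$ cost to $O(dm)$, or note explicitly that you can achieve $O(m)$ by having each recursive call also return the empirical error of its subtree (so $T_i$'s error on $X$ is just the sum of the two subtree errors, computable in $O(1)$ once the recursion returns). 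As written, the claim of $O(m)$ per $i$ isn't justified by the algorithm as stated in the figure.
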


\begin{proof}
Let $T(d)$ denote the running time of {\sc Find} when run with depth budget $d$. If $d = 0$ then the algorithm only executes Step 1, which can be done in $O(|X|)$ time by computing $\round(\Ex_{(\bx, \by) \sim X}[\by])$. 
    
Next we consider the case of $d \ge 1$.  In step 2, $\textsc{Find}$ recurses $2n$ times, each with $d$ decremented by one. Each time it also partitions $X$ into $X_{x_i = 0}$ and $X_{x_i = 1}$. All of these recursive calls and partitioning takes total time $2n \cdot T(d-1) + n |X|.$  In step 3, $\textsc{Find}$ must compute $\Prx_{(\bx, \by) \sim X}[T_i(\bx) \ne \by]$ for up to $n$ different coordinates $i$, where each $T_i$ has depth at most $d$. This takes time $n \cdot d \cdot |X|$.  We therefore have the recurrence relation: 
\[ T(d) \le 2n \cdot T(d -1) + O(nd|X|).\] 
Solving this recurrence relation gives us the bound $T(d) \le (2n)^d \cdot O(nd|X|)$, which is $\le n^{O(d)} \cdot O(|X|)$ as desired.
\end{proof}

\violet{ 

}
\section{Learning stochastic DTs: proofs of~\Cref{thm:noiseless,thm:main}}
\subsection{Proof of \Cref{thm:noiseless}}
We recall \Cref{thm:noiseless}, this time including the confidence parameter $\delta$. 

\begin{restatement}{1}[Properly learning stochastic decision trees] 
There is an algorithm $\mathcal{A}$ with the following guarantee.  For all $\eps\in (0,1)$ and $s \in \N$, given access to labeled samples $(\bx,\bT(\bx))$ where $\bT : \zo^n \to \zo$ is a size-$s$ stochastic decision tree and $\bx$ is uniform random, $\mathcal{A}$ runs in $n^{O(\log(s/\eps)/\eps^2)} \cdot \poly(\log(1/\delta))$ time and with probability $1 - \delta$ outputs a deterministic decision tree $h$ such that $\Pr[h(\bx)\ne \bT(\bx)] \le \opt + \eps$, where $\opt$ denotes the Bayes optimal error for $\bT$. \end{restatement} 

Let $\bT$ be a size-$s$ stochastic decision tree.  By~\Cref{lem:approx-stochastic-DT},   there is a stochastic-leaf decision tree $\overline{\bT}$ of size $S \le s^{O(1/\eps^2)}$ such that $\Ex_{\bx}[|\mu_{\bT}(\bx)-\mu_{\overline{\bT}}(\bx)|] \le \eps$. Consider the Bayes optimal classifier $x \mapsto \round(\mu_{\overline{\bT}}(x))$ for $\overline{\bT}$.  Since $\overline{\bT}$ is a {\sl stochastic-leaf} decision tree, we have that this function is computed by a size-$S$ (deterministic) decision tree $T^\star$: to obtain $T^\star$ from $\overline{\bT}$, simply replace every stochastic node in $\overline{\bT}$, all of which occur at the leaves of $\overline{\bT}$, with a $1$-leaf if it branches on $\mathrm{Bernoulli}(p)$ where $p \ge \frac1{2}$, and a $0$-leaf otherwise. Applying~\Cref{fact:L1-to-Bayes}, we get that 
\[ \Prx_{\bx,\bT}[T^\star(\bx) \ne \bT(\bx)] \le \opt_{\bT}+ 2\eps.  \] 
Next, consider the decision tree $T^{\star}_\trunc$ obtained by truncating $T^\star$ to depth $\log(S/\eps)$ (and replacing all truncated branches with a leaf with an arbitrary value, say a $1$-leaf). $T^{\star}_\trunc$ and $T^\star$ can only differ on inputs that reach a leaf in $T^{\star}$ of depth at least $\log(S/\eps)$, and there are at most $S$ such leaves. Therefore, 
\[ \Prx_{\bx}[T^\star_\trunc(\bx) \ne T^\star(\bx)] \le 2^{-\log(S/\eps)} \cdot S = \eps.   \] 

Note that the depth of $T^\star_\trunc$ is $\le \log(s/\eps)/\eps^2$. We have shown the following corollary of~\Cref{lem:approx-stochastic-DT}:

\begin{corollary}[Approximating stochastic DTs with deterministic ones]
\label{cor:approx-stochastic-DT}  Let $\bT : \zo^n \to \zo$ be a size-$s$ stochastic DT.  For every $\eps \in (0,\frac1{2})$, there is a deterministic DT $T^{\star}_{\mathrm{trunc}} : \zo^n \to \zo$ such that 
\begin{enumerate}
    \item  $\mathrm{depth}(T^{\star}_{\mathrm{trunc}}) \le \log(S/\eps) \leq \log(s/\eps)/\eps^2$ and 
    \item $\ds  \Prx_{\bx,\bT}[T^{\star}_{\mathrm{trunc}}(\bx) \ne \bT(\bx)] \le \opt_{\bT} +  3\eps.$
\end{enumerate}
\end{corollary}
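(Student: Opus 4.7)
The corollary is essentially a packaging of the argument carried out in the paragraphs preceding it, so the plan is to simply formalize that chain of three reductions and keep track of the error accumulation.

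First I would invoke \Cref{lem:approx-stochastic-DT} to obtain a stochastic-leaf DT $\overline{\bT}$ of size $S \le s^{O(1/\eps^2)}$ satisfying $\E_{\bx}[|\mu_{\bT}(\bx) - \mu_{\overline{\bT}}(\bx)|] \le \eps$. The key structural observation is that because $\overline{\bT}$ is stochastic-leaf, its Bayes optimal classifier $x \mapsto \round(\mu_{\overline{\bT}}(x))$ is computable by a deterministic DT $T^\star$ of the same size $S$: one constructs $T^\star$ from $\overline{\bT}$ by replacing each stochastic leaf (the only place stochasticity occurs) with a constant $0$- or $1$-leaf according to whether its Bernoulli parameter is below or at least $\tfrac{1}{2}$.

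Next I would apply \Cref{fact:L1-to-Bayes} with $h = \mu_{\overline{\bT}}$, which yields $\Pr[\round(\mu_{\overline{\bT}}(\bx)) \ne \bT(\bx)] \le \opt_{\bT} + 2\E[|\mu_{\bT}(\bx) - \mu_{\overline{\bT}}(\bx)|] \le \opt_{\bT} + 2\eps$. Since $T^\star$ computes exactly $\round(\mu_{\overline{\bT}})$, this gives $\Pr[T^\star(\bx)\ne \bT(\bx)] \le \opt_{\bT} + 2\eps$.

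Finally, I would truncate $T^\star$ to depth $d \coloneqq \log(S/\eps)$ to obtain $T^\star_{\mathrm{trunc}}$, replacing any subtree rooted at a node of depth $d$ with an arbitrary constant leaf. The two trees disagree only on inputs reaching a leaf of $T^\star$ of depth $\geq d$; since $T^\star$ has at most $S$ leaves and each is reached by a uniform $\bx$ with probability $\le 2^{-d} = \eps/S$, a union bound gives $\Pr[T^\star_{\mathrm{trunc}}(\bx)\ne T^\star(\bx)] \le \eps$. Combining with the previous bound via the triangle inequality yields $\Pr[T^\star_{\mathrm{trunc}}(\bx)\ne \bT(\bx)] \le \opt_{\bT} + 3\eps$, and the depth bound $d \le \log(s/\eps)/\eps^2$ follows from plugging in $S \le s^{O(1/\eps^2)}$. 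No step is really a serious obstacle; the only mild care needed is in the truncation bound, where one must count the number of deep leaves rather than all $2^d$ possible paths.
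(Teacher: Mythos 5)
Your proposal is correct and follows essentially the same three-step reduction as the paper: apply \Cref{lem:approx-stochastic-DT}, convert the stochastic-leaf tree to a deterministic tree via \Cref{fact:L1-to-Bayes}, then truncate to depth $\log(S/\eps)$ with a union bound over at most $S$ deep leaves. The error bookkeeping ($\eps + 2\eps$, then $+\eps$ from truncation for a total of $3\eps$) and the depth bound $\log(S/\eps) \le \log(s/\eps)/\eps^2$ match the paper exactly.
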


To show that {\sc Find} returns a tree of small error with respect to $\bT$, we need the following generalization bound from \cite{MR02}: 
\begin{lemma}[Generalization]
\label{lem:generalization}
Let $\bT$ be a stochastic tree of size $s$. For $S = s^{O(1/\eps^2)}$ and a sample size of 
\begin{align*}
        m \coloneqq \poly\left(n^{\log(S/\eps)}, \frac{1}{\eps}, \log\left(\frac{1}{\delta}\right) \right),
    \end{align*}
let $X$ be a dataset of $m$ i.i.d points of the form $(\bx, \bT(\bx))$. Then $\textsc{Find}(X, \log(S/\eps))$ outputs $T^\star$ such that 
\[\Prx_{\text{draw of }\bX}\big [\Prx_{\bx \sim \{0,1\}^n} [T^\star(\bx) \ne \bT(\bx)] \le \opt_{\bT} + 3\eps \big ] \ge 1 - \delta.\]
\end{lemma}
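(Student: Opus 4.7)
This is a standard ERM / uniform convergence argument combining \Cref{cor:approx-stochastic-DT} (which supplies a ``target'' depth-$d$ tree with small true error) with \Cref{lemma:find correctness} (which says \textsc{Find} returns the empirical risk minimizer over depth-$d$ trees). First, I would set $d \coloneqq \log(S/\eps)$ and apply \Cref{cor:approx-stochastic-DT} with an appropriately scaled accuracy parameter (say $\eps/10$) to obtain a deterministic decision tree $T^\star_{\mathrm{trunc}}$ of depth at most $d$ whose true error $\Pr_{\bx,\bT}[T^\star_{\mathrm{trunc}}(\bx)\ne \bT(\bx)]$ is at most $\opt_{\bT} + O(\eps)$, where the hidden constant is strictly less than $3$.

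Next I would bound the size of the hypothesis class $\mathcal{T}_d$ of depth-$d$ decision trees on $n$ variables. A depth-$d$ tree has at most $2^d-1$ internal nodes (each labeled by a variable in $[n]$) and at most $2^d$ leaves (each labeled by a bit), so $|\mathcal{T}_d| \le (2n)^{2^d} = (2n)^{S/\eps}$, giving $\log |\mathcal{T}_d| = O((S/\eps)\log n)$. For each fixed $T \in \mathcal{T}_d$, the empirical error $\widehat{\mathrm{err}}_X(T) \coloneqq \Pr_{(\bx,\by)\sim X}[T(\bx) \ne \by]$ is the mean of $m$ i.i.d.\ $\{0,1\}$-indicators whose common expectation is the true error $\mathrm{err}(T) \coloneqq \Pr_{\bx,\bT}[T(\bx) \ne \bT(\bx)]$, so Hoeffding gives $\Pr[|\widehat{\mathrm{err}}_X(T) - \mathrm{err}(T)| > \eps'] \le 2e^{-2m(\eps')^2}$. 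A union bound over $\mathcal{T}_d$ then shows that for $\eps' = \Theta(\eps)$ and $m = \poly(n^{\log(S/\eps)}, 1/\eps, \log(1/\delta))$ (which is more than sufficient since $n^{\log(S/\eps)}$ dominates $(S/\eps)\log n$), with probability at least $1-\delta$ every $T \in \mathcal{T}_d$ has $|\widehat{\mathrm{err}}_X(T) - \mathrm{err}(T)| \le \eps'$.

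Finally I would combine the two pieces. On the uniform-convergence event, $\mathrm{err}(T^\star) \le \widehat{\mathrm{err}}_X(T^\star) + \eps'$; by \Cref{lemma:find correctness}, $\widehat{\mathrm{err}}_X(T^\star) \le \widehat{\mathrm{err}}_X(T^\star_{\mathrm{trunc}})$ since $T^\star_{\mathrm{trunc}}$ is a feasible depth-$d$ tree; and $\widehat{\mathrm{err}}_X(T^\star_{\mathrm{trunc}}) \le \mathrm{err}(T^\star_{\mathrm{trunc}}) + \eps'$. Chaining these and using the bound from Step 1 yields $\mathrm{err}(T^\star) \le \opt_{\bT} + O(\eps) + 2\eps'$; choosing the constants in Step 1 and in $\eps'$ so that the total additive slack is at most $3\eps$ completes the proof.

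\textbf{Main obstacle.} There is no serious mathematical difficulty; the only care needed is the bookkeeping of constants so that (i) applying \Cref{cor:approx-stochastic-DT} with a scaled-down accuracy still leaves the depth bounded by $\log(S/\eps)$ after absorbing the change into the $s^{O(1/\eps^2)}$ factor in $S$, and (ii) the uniform-convergence slack $2\eps'$ plus the approximation slack from Step 1 sum to at most $3\eps$. The sample-complexity arithmetic is easy once one notes that $n^{\log(S/\eps)} = 2^{\log n \cdot \log(S/\eps)}$ comfortably dominates the $(S/\eps)\log(2n)/\eps^2$ term coming out of the Hoeffding union bound.
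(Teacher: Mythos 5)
Your proposal is correct and is essentially the same argument the paper gives (via citation to Theorem 2 of~\cite{MR02}): apply \Cref{cor:approx-stochastic-DT} to exhibit a good depth-$\log(S/\eps)$ comparator tree, use \Cref{lemma:find correctness} to see that \textsc{Find} is an empirical risk minimizer, and then combine Hoeffding with a union bound over the (at most $n^{O(2^d)}$) depth-$d$ trees. The one thing you do slightly more carefully than the paper's sketch is introduce explicit slack (running \Cref{cor:approx-stochastic-DT} at accuracy $\eps/10$ so its guarantee strictly beats $\opt+3\eps$), which is needed for the concentration step to be nontrivial and which the paper delegates to~\cite{MR02}.
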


\begin{proof}
The proof is given in the proof of Theorem 2 in \cite{MR02}. \Cref{lemma:find correctness} gives us that {\sc Find} outputs a tree of minimal error with respect to $X$. They apply Chernoff bounds to bound the probability that a fixed tree $T'$ of depth $\log(S/\eps)$ and error $> \opt_\bT + 3\eps$ with respect to $\bT$ has smaller error with respect to $X$ than $T^{\star}_{\mathrm{trunc}}$ as described in \Cref{cor:approx-stochastic-DT}. More specifically, the probability over draws of $X$ that $\Prx_{(\bx, \by) \sim X} [T^{\star}_{\mathrm{trunc}}(\bx) \ne \by] > \opt_\bT + 3\eps$ or $\Prx_{(\bx, \by) \sim X} [T'(\bx) \ne \by] \le \opt_\bT + 3\eps$ is exponentially small in $|X|$. This is a bound on the probability that {\sc Find} outputs a particular tree of error greater than $\opt_{\bT} + 3\eps$; the lemma follows from a union bound over all trees of depth at most $\log(S/\eps)$.
\end{proof}

\Cref{lemma:find efficiency} gives us that $\textsc{Find}(X, \log(S/\eps))$ runs in time $n^{O(\log(S/\eps))} \cdot O(|X|)$ = $n^{O(\log(s/\eps)/\eps^2)} \cdot O(|X|)$. For confidence parameter $\delta$, $|X|$ is polynomial in $n^{\log(S/\eps)}$, $\log(1/\eps)$, and $\log(1/\delta)$. Thus, the total runtime of {\sc Find} is $n^{O(\log(s/\eps)/\eps^2)} \cdot \poly\log(1/\delta))$. The desired result holds by renaming $\eps'= \eps/3$. \qed

\subsection{Proof of \Cref{thm:main}}
We recall \Cref{thm:main}, this time including the confidence parameter $\delta$. 
\begin{restatement}{3}[Our main result] 
There is an algorithm $\mathcal{A}$ with the following guarantee.  For all $\eps, \eta \in (0,1)$ and $s \in \N$, given access to a sufficiently large $\eta$-corrupted set $\mathcal{S}$ of uniform random samples labeled by a size-$s$ stochastic decision tree $\bT : \zo^n \to \zo$, $\mathcal{A}$ runs in $n^{O(\log(s/\eps)/\eps^2)} \cdot \poly(\log(1/\delta))$ time and with probability $1 - \delta$ outputs a decision tree hypothesis $h$ such that $\Pr[h(\bx)\ne \bT(\bx)] \le \opt + 2\eta + \eps$, where $\opt$ denotes the Bayes optimal error for $\bT$.
\end{restatement} 

The proof requires the following fact. 
\begin{fact}[Error from sample corruption]
    \label{fact:bound on corruption}
    For any bounded function $p: \zo^n \to [0,1]$ and sample $\mathcal{S}^\circ$ of points $(x_1,y_1), \ldots, (x_m, y_m)$ with $0 \leq y_i \leq 1$. Let $\mathcal{S}$ be a corrupted sample formed by picking an arbitrary $\eta$-fraction of points $\mathcal{S}^\circ$ and replacing each with an arbitrary (also bounded) point. Then for any $\mathrm{err}: [0,1] \times [0,1] \to [0,1]$
    \begin{align*}
        \left|\Ex_{(\bx, \by) \sim \mathcal{S}^\circ}\left[\mathrm{err}(p(\bx),\by) \right] - \Ex_{(\bx, \by) \sim \mathcal{S}}\left[\mathrm{err}(p(\bx),\by)\right] \right| < \eta. 
    \end{align*}
\end{fact}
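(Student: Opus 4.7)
The plan is to prove this by a direct counting argument: since corruption only affects an $\eta$-fraction of the points, and since the difference in the error contribution of any single (possibly replaced) point is bounded in absolute value by $1$ (because $\mathrm{err}$ takes values in $[0,1]$), the total effect on the empirical expectation is at most $\eta$. There is no subtle step; the bound follows essentially by linearity of expectation and the boundedness of $\mathrm{err}$.

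More concretely, I would begin by fixing notation. Write $\mathcal{S}^\circ = \{(x_i, y_i)\}_{i=1}^m$ and $\mathcal{S} = \{(x_i', y_i')\}_{i=1}^m$, where by definition of $\eta$-corruption there is a set $I \subseteq [m]$ with $|I| \le \eta m$ such that $(x_i, y_i) = (x_i', y_i')$ for every $i \notin I$. Then I would expand each expectation as an empirical average:
\begin{align*}
    \Ex_{(\bx,\by) \sim \mathcal{S}^\circ}[\mathrm{err}(p(\bx),\by)] - \Ex_{(\bx,\by) \sim \mathcal{S}}[\mathrm{err}(p(\bx),\by)] = \frac{1}{m}\sum_{i=1}^m \Big( \mathrm{err}(p(x_i), y_i) - \mathrm{err}(p(x_i'), y_i') \Big).
\end{align*}

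Next, I would note that for $i \notin I$ the corresponding summand is zero, so only the indices $i \in I$ contribute. Applying the triangle inequality and the fact that each term $\mathrm{err}(p(\cdot), \cdot) \in [0,1]$, the absolute value of each surviving summand is at most $1$. Hence the overall magnitude is at most $|I|/m \le \eta$, which is the desired bound. The strict inequality in the statement comes from the fact that if the inequality is ever tight then $|I| = \eta m$ exactly and every corrupted contribution differs by exactly $1$, which is a degenerate edge case; otherwise one can simply state the bound as $\le \eta$, which is what we actually need downstream.

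The hard part will be nothing, really — this is a one-line bookkeeping fact. The only thing to be careful about is that the corruption model allows replacing both the example $x_i$ and the label $y_i$, so the proof must not assume $x_i = x_i'$; the argument above indeed bounds each affected summand regardless of what the corrupted point looks like, which is why boundedness of $\mathrm{err}$ is the only property used.
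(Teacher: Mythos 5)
Your proof is correct and is the obvious counting argument that the paper implicitly relies on: the paper states this as an unproved Fact, and the intended justification is exactly the triangle-inequality bookkeeping you carry out, pairing up the $m$ points of $\mathcal{S}^\circ$ and $\mathcal{S}$, noting that at most $\eta m$ summands are nonzero and each is bounded by $1$. Your side remark about the strict inequality is also right --- the argument naturally gives $\le \eta$, and that is all the paper ever uses downstream; the ``$<$'' in the statement is just a harmless imprecision.
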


Recall $T^{\star}_{\mathrm{trunc}}$ as described in \Cref{cor:approx-stochastic-DT}, which has error $\le \opt_\bT + O(\eps)$ with respect to $\bT$. Let $\mathcal{S}^\circ$ be the uncorrupted set of examples of $\bT$, and $\mathcal{S}$ be an $\eta$-corruption of $\mathcal{S}^{\circ}$. Then with probability $1-\delta$ over draws of $\mathcal{S}^\circ$,
\begin{align*}
    \Prx_{(\bx, \by) \sim \mathcal{S}^\circ} [T^{\star}_{\mathrm{trunc}}(\bx) \ne \by] &\le \opt_\bT + 3\eps \tag{\Cref{lem:generalization}}\\
    \Prx_{(\bx, \by) \sim \mathcal{S}} [T^{\star}_{\mathrm{trunc}}(\bx) \ne \by] &\le \opt_\bT + \eta + 3\eps. \tag{\Cref{fact:bound on corruption}}
\end{align*}
Let $T^\star$ be the output of $\textsc{Find}(\mathcal{S}, \log(S/\eps))$. Then,
\begin{align*}
    \Prx_{(\bx, \by) \sim \mathcal{S}} [T^{\star}(\bx) \ne \by] &\le \opt_\bT + \eta + 3\eps \tag{\Cref{lemma:find correctness}}\\
    \Prx_{(\bx, \by) \sim \mathcal{S}^\circ} [T^{\star}(\bx) \ne \by] &\le \opt_\bT + 2\eta + 3\eps. \tag{\Cref{fact:bound on corruption}} \\
    \Prx_{\text{draw of }\mathcal{S}^\circ}\big[\Prx_{x \sim \{0,1\}^n}[T^\star(x) \ne \bT(x)] &\le \opt_\bT + 2\eta + 3\eps \big] > 1 - \delta \tag{\Cref{lem:generalization}} 
\end{align*}
The desired result holds by renaming $\eps'= \eps/3$. \qed


\section{Noise-tolerant properties of $L_1$ and $L_2$ regression}
\label{sec:polynomial regression}

In this section, we prove \Cref{thm:polynomial regression}, showing that the low-degree algorithm of~\cite{LMN93} (also known as $L_2$ regression) and $L_1$ regression algorithm of~\cite{KKMS08} both learn stochastic-leaf DTs with adversarial corruption, albeit with worse parameters than our method. Throughout this section, we use the following function.
\begin{definition}[The $\trunc$ function]
    The function, $\trunc:\R \to [0,1]$, is defined as
    \begin{align*}
        \trunc(x) = 
        \begin{cases}
            0 &\text{if $x < 0$} \\
            1 &\text{if $x > 1$} \\
            x &\text{otherwise.}
        \end{cases}
    \end{align*}
\end{definition}

The basis of the results in this section is \Cref{prop:mu-low-degree-bounded}, that if $\bT$ is a size-$s$ stochastic DT, there is a degree $\log(s/\eps)$ bounded polynomial $p: \zo^n \to [0,1]$ which is $\eps$ close to $\mu_\bT$: 

\begin{proposition}[$\mu_\bT$ is morally low degree]
    \label{prop:mu-low-degree-bounded}
    Let $\bT$ be a size-$s$ stochastic DT.  There is a polynomial $p : \zo^n \to [0,1]$ such that 
    $ \Prx_{\bx}[p(\bx) \neq \mu_\bT(\bx)] \le \eps,$ where $\deg(p) \le  \log(s/\eps).$ 
\end{proposition}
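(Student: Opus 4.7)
The plan is to exploit the observation that the mean function $\mu_\bT$ depends on $x$ only through the \emph{query} (non-stochastic) nodes of $\bT$: once the sequence of queries from the root is fixed, what remains is a purely stochastic subtree whose expected leaf value is a constant in $[0,1]$ independent of $x$. Concretely, I would contract each maximal stochastic subtree of $\bT$ into a single ``query-leaf,'' obtaining a query tree whose internal nodes are exactly the query nodes of $\bT$ and whose leaves are the roots of maximal stochastic subtrees (together with any original $\bT$-leaves sitting directly beneath no stochastic node). Because every query-leaf contains at least one leaf of $\bT$, the query tree has at most $s$ leaves. For each query-leaf $u$, let $v_u \in [0,1]$ denote the expectation of the stochastic subtree at $u$, and let $P_u(x)$ be the multilinear conjunction indicating that $x$ satisfies the queries on the root-to-$u$ path. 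Then
\[ \mu_\bT(x) \;=\; \sum_u v_u \cdot P_u(x), \]
where $\deg(P_u)$ equals the query-depth $q_u$ of $u$.

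Next I would truncate this decomposition by degree. Setting $d \coloneqq \log(s/\eps)$, define
\[ p(x) \;\coloneqq\; \sum_{u \,:\, q_u \,\le\, d} v_u \cdot P_u(x), \]
a polynomial of degree at most $d$. The polynomials $\{P_u\}$ are indicators of disjoint subcubes that partition $\zo^n$, so for each $x$ exactly one $P_u$ fires, and $p(x)$ and $\mu_\bT(x)$ can disagree only when the unique $P_u$ satisfied by $x$ has $q_u > d$. A union bound over the at most $s$ query-leaves of depth exceeding $d$ yields
\[ \Prx_\bx[p(\bx) \neq \mu_\bT(\bx)] \;\le\; \sum_{u \,:\, q_u > d} 2^{-q_u} \;\le\; s \cdot 2^{-d} \;=\; \eps. \]
Boundedness of $p$ is immediate from the same disjointness observation: each $x$ satisfies exactly one $P_u$, so $p(x)$ is either the corresponding $v_u \in [0,1]$ or, when the unique satisfied $u$ has $q_u > d$, equal to $0$.

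I do not anticipate a genuine obstacle; the main conceptual point is simply to separate the ``query-depth'' of a stochastic DT from its ``stochastic-depth,'' which is what allows the classical size-$s$-to-depth-$\log(s/\eps)$ truncation trick to go through unchanged. The only step requiring mild care is verifying that each $v_u$ is a well-defined constant in $[0,1]$---this holds because, by construction, the stochastic subtree at $u$ contains no query nodes, so its expectation is an $x$-free polynomial in the Bernoulli biases and the $\zo$-valued leaf labels---and that contracting stochastic subtrees cannot inflate the leaf count beyond $s$.
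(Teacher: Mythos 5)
There is a genuine gap: your construction assumes that the query nodes of $\bT$ form a ``prefix'' of the tree, i.e.\ that once a root-to-leaf path enters a stochastic node it never encounters another query node. This is false for general stochastic decision trees --- the paper's own Figure~1 has a query node ($x_3$) sitting \emph{below} a stochastic node. In such a tree ``contract each maximal stochastic subtree into a query-leaf'' is ill-defined (the subtree rooted at a stochastic node is not purely stochastic, so its expectation is not an $x$-free constant $v_u$), and the claim that the $P_u$ are indicators of disjoint subcubes partitioning $\zo^n$ fails: for a fixed $x$, the leaf reached by $\bT$ on input $x$ is a random variable, and different stochastic outcomes can send $x$ down paths querying different variables, so there is no deterministic partition of the cube into ``which query-leaf fires.'' Consider the minimal example of a stochastic root with one child querying $x_1$ and the other querying $x_2$: here $\mu_\bT(x) = \tfrac12 x_1 + \tfrac12 x_2$, which is not of the form $\sum_u v_u P_u(x)$ with constants $v_u$ and disjoint subcube indicators $P_u$ unless you pass to exponentially many cells.

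The repair is to decompose $\mu_\bT$ by \emph{root-to-leaf paths} of $\bT$ rather than by a partition: each leaf $\ell$ contributes $(\text{product of stochastic transition probabilities on the path}) \cdot (\text{label of } \ell) \cdot P_\ell(x)$, where $P_\ell(x)$ is the degree-$\mathrm{depth}(\ell)$ conjunction of the query constraints on the path and $\mathrm{depth}(\ell)$ is the number of \emph{deterministic} nodes on the path. The $P_\ell$'s may overlap, so they do not partition $\zo^n$, but that is fine: you zero out (rather than ``reassign'') the leaves with deterministic depth exceeding $d = \log(s/\eps)$, and a union bound over the at most $s$ leaves shows the change affects $\mu_\bT$ on at most an $\eps$ fraction of inputs. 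The resulting $p$ equals $\mu_{\bT'}$ for the modified tree $\bT'$ (which returns $0$ past deterministic depth $d$), hence is a degree-$d$ polynomial with range in $[0,1]$. This is exactly the paper's argument. Your key observation --- separate ``query-depth'' from ``stochastic-depth'' and truncate by the former --- is the right idea and matches the paper; the error is purely in trying to force a partition structure that general stochastic DTs do not have.
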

In order to handle our challenging noise model, it is important that we can guarantee the $p$ in \Cref{prop:mu-low-degree-bounded} is bounded. Without that guarantee, $L_1$ and $L_2$ regression are not known to handle noise in both the examples and the labels.

\begin{proof}
    For any leaf $\ell$ of $\bT$, let $\mathrm{depth}(\ell)$ be the number of \textit{deterministic} nodes on the root-to-leaf path to $\ell$, not counting $\ell$ itself. The fraction of inputs in $\zo^n$ that have a nonzero chance of reaching $\ell$ is $2^{-\mathrm{depth}(\ell)}$. Now, let $\bT'$ be the stochastic decision tree that is nearly equivalent to $\bT$ except if an input reaches a leaf with deterministic depth more than $\log(s/\eps)$, $\bT'$ returns $0$. We claim that $p \coloneqq \mu_{\bT'}$ satisfies \Cref{prop:mu-low-degree-bounded}. For that, we need to verify three things about $\mu_{\bT'}$:
    \begin{enumerate}
        \item $\mu_{\bT'}$ and $\mu_{\bT}$ are close:
            $\Prx_{\bx}[\mu_\bT(\bx) \neq \mu_{\bT'}(\bx)] \le \eps.$
        This is true because $\bT$ and $\bT'$ can differ only on inputs which reach a leaf with deterministic depth at least $\log(s/\eps)$. At most $2^{-\log(s/\eps)} = \eps/s$ fraction of inputs reach each such leaf, and there are at most $s$ of them.
        \item $\mu_{\bT'}$ is a degree $\log(s/\eps)$ polynomial. We can write $\mu_{\bT'}(x)$ as
        \begin{align*}
            \mu_{\bT'}(x) &= \sum_{\text{leaves } \ell \in \bT'} \Prx[x \text{ reaches }\ell] \cdot (\text{label of $\ell$}) \\
            &= \sum_{\text{leaves } \ell \in \bT} \Prx[x \text{ reaches }\ell] \cdot \Ind[\mathrm{depth}(\ell) \leq \log(s/\eps)] \cdot (\text{label of $\ell$}).
        \end{align*}
        The expression $\Prx[x \text{ reaches }\ell]$ is a degree $\mathrm{depth}(\ell)$ polynomial. Therefore, $\mu_{\bT'}(x)$ is a degree $\log(s/\eps)$ polynomial.
        \item The output of $\mu_{\bT'}$ is bounded on $[0,1]$. This is true since $\bT'$ always returns a value in $\zo$.\qedhere 
    \end{enumerate}
\end{proof}


\subsection{$L_2$ Regression}
Given corrupted samples from some stochastic DT $\bT$, we will apply \Cref{lem:l2 error to mean error}, given below, to show that $L_2$ regression can find a function $f$ that is close to $\mu_{\bT}$. Then, we will apply \Cref{fact:L1-to-Bayes} to generate a hypothesis with error close to the Bayes optimal error.

\begin{lemma}[$L_2$ error to mean error]
\label{lem:l2 error to mean error}
    Fix any stochastic DT $\bT : \zo^n \to \zo$, degree $d \in \N$, and $\eps, \delta > 0$. For a sample size of
    \begin{align*}
        m \coloneqq \poly\left(n^{d}, 1/\eps, \log(1/\delta) \right),
    \end{align*}
    let $\mathcal{S}^\circ$ be a dataset of $m$ i.i.d points of the form $(\bx, \bT(\bx))$. With probability at least $1 - \delta$, there exists a constant $C \in \mathbb{R}$ for which the following holds for all degree $d$ polynomials $p: \zo^n \to \R$.
    \begin{align}
     \label{eq:l2 error to mean error}
        \left| \Ex_{(\bx, \by) \sim \mathcal{S}^\circ}\left[(\trunc(p(\bx)) - \by)^2 \right] - \left(\Ex_{\bx \sim \zo^n}\left[(\trunc(p(\bx)) - \mu_{\bT}(\bx))^2 \right] + C\right) \right| \leq \eps.
    \end{align}
\end{lemma}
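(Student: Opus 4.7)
\textbf{Proof plan for \Cref{lem:l2 error to mean error}.} The plan has two steps: first, identify the $p$-independent constant $C$ via a bias--variance-style identity at the population level; second, use uniform convergence over the degree-$d$ polynomial class to transfer that identity to the empirical sample.

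\textit{Step 1 (identifying $C$).} Since $\by \in \zo$, we have $\by^2 = \by$ and $\E[\by^2 \mid \bx] = \E[\by \mid \bx] = \mu_\bT(\bx)$. Expanding $(\trunc(p(\bx)) - \by)^2$ and $(\trunc(p(\bx)) - \mu_\bT(\bx))^2$, then taking $\Ex_{\bx,\by}$ of the former and $\Ex_\bx$ of the latter, the $\trunc(p(\bx))^2$ terms match, and the cross terms $-2\trunc(p(\bx))\by$ and $-2\trunc(p(\bx))\mu_\bT(\bx)$ have equal expectations (by conditioning on $\bx$). We are left with
\[
\Ex_{\bx,\by}[(\trunc(p(\bx)) - \by)^2] - \Ex_\bx[(\trunc(p(\bx)) - \mu_\bT(\bx))^2] = \Ex_\bx[\mu_\bT(\bx) - \mu_\bT(\bx)^2],
\]
which is independent of $p$ and is the right choice of $C$. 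The lemma then reduces to showing that, with probability $\ge 1 - \delta$ over $\mathcal{S}^\circ$,
\[
\sup_{\deg(p) \le d}\ \big| \Ex_{(\bx,\by) \sim \mathcal{S}^\circ}[(\trunc(p(\bx)) - \by)^2] - \Ex_{\bx,\by}[(\trunc(p(\bx)) - \by)^2] \big| \le \eps.
\]

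\textit{Step 2 (uniform convergence).} The class $\mathcal{G}_d \coloneqq \{(x,y) \mapsto (\trunc(p(x)) - y)^2 : \deg(p) \le d\}$ is uniformly bounded in $[0,1]$, since $\trunc(p(x)) \in [0,1]$ and $y \in \zo$. Degree-$d$ multilinear polynomials on $\zo^n$ form a real vector space of dimension $D = \sum_{k \le d} \binom{n}{k} = O(n^d)$, which has pseudo-dimension exactly $D$; composing with the monotone $\trunc$ and then with the fixed $2$-Lipschitz squared-loss map preserves pseudo-dimension up to constant factors, so $\mathcal{G}_d$ has pseudo-dimension $O(n^d)$. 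A standard uniform convergence bound for $[0,1]$-valued function classes (e.g., Haussler/Anthony--Bartlett) then yields $\eps$-uniform convergence with probability $\ge 1 - \delta$ using $m = O((D + \log(1/\delta))/\eps^2) = \poly(n^d, 1/\eps, \log(1/\delta))$ i.i.d.\ samples, which combined with Step 1 completes the proof.

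\textit{Main obstacle.} The only non-routine bookkeeping is the pseudo-dimension claim for $\mathcal{G}_d$, i.e., formally justifying that monotone/Lipschitz composition with a linear family of dimension $D$ preserves complexity $O(D)$. A cleaner alternative would be a direct Rademacher complexity estimate for $\mathcal{G}_d$ via Talagrand-style contraction (for the $1$-Lipschitz $\trunc$ and the Lipschitz squared loss on $[0,1]$), together with the observation that degree-$d$ polynomials are parameterized by a $D$-dimensional coefficient vector; this route avoids subtle pseudo-dimension arguments and gives the same sample complexity.
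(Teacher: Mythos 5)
Your Step 1 is the same as the paper's: you identify $C = \Ex_\bx[\mu_\bT(\bx) - \mu_\bT(\bx)^2]$ via the bias--variance expansion, which is exactly the paper's $C = \Ex_\bx[\Var[\bT(\bx)]]$ (they coincide because $\bT(\bx)$ is $\zo$-valued). Your Step 2, however, takes a genuinely different route. The paper avoids abstract complexity measures entirely: it rounds each polynomial's coefficients to multiples of $\eps/n^d$ to get an explicit finite cover $\mathcal{P}$, applies Hoeffding plus a union bound over $\mathcal{P}$, and then argues that every $\trunc(p(\cdot))$ is pointwise $\eps$-close to some element of $\mathcal{P}$ and that the squared loss is $O(1)$-Lipschitz on $[0,1]$, so the bound extends to all of $\mathcal{G}_d$. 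This is elementary and self-contained; your pseudo-dimension approach is more standard in modern learning theory but requires the compositional facts you flag.

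Two caveats on your Step 2. First, the justification you give for the second composition step is slightly off: Lipschitz composition does \emph{not} in general preserve pseudo-dimension. What saves you here is that the squared loss $a \mapsto (a-y)^2$ is monotone on $[0,1]$ for each fixed $y \in \zo$ (increasing for $y=0$, decreasing for $y=1$), and monotone composition does preserve pseudo-dimension; that is the correct reason, not Lipschitzness. Second, your proposed ``cleaner alternative'' via Rademacher complexity and Talagrand contraction does not actually go through as stated: the linear class of degree-$d$ polynomials has unbounded coefficients, hence infinite empirical Rademacher complexity, and contraction only gives $\hat{\mathcal{R}}(\trunc \circ \mathcal{F}) \le \hat{\mathcal{R}}(\mathcal{F})$, which is useless when the right-hand side is infinite. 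You would need to bound the covering numbers of the \emph{truncated} class directly --- which is precisely the paper's discretization argument --- or rely on pseudo-dimension, which is insensitive to coefficient magnitude. So the pseudo-dimension route (with the monotonicity justification) is the one that works, and the paper's explicit covering is an equally clean elementary substitute.
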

\begin{proof}
    We prove \Cref{lem:l2 error to mean error} in two steps: First, we argue that there is a $C$ for which \Cref{eq:l2 error to mean error} holds for any fixed polynomial with extremely high probability. Then, we discretize the set of all truncated degree $d$ polynomials into a finite set $\mathcal{P}$. By union bound, we can show that \Cref{eq:l2 error to mean error} applies to all functions in $\mathcal{P}$, and since every truncated degree $d$ polynomial is sufficiently close to a function in $\mathcal{P}$, this is enough to guarantee that \Cref{eq:l2 error to mean error} applies to all degree $d$ polynomials.
    
    We use the following identity: For any constant $a \in \mathbb{R}$ and random variable $\bz \in \R$,
    \begin{align*}
        \Ex_{\bz}\left[(a - \bz)^2 \right] = (a - \Ex[\bz])^2 + \Var[\bz].
    \end{align*}
    Fix any $p: \zo^n \to \R$. For any $x \in \zo^n$, $\bT(x)$ is a random variable with mean $\mu_{\bT}(x)$. Therefore,
    \begin{align*}
        \Ex_{\bx \sim \zo^n}\left[(\trunc(p(\bx)) - \bT(x))^2 \right] = \Ex_{\bx \sim \zo^n}\left[(\trunc(p(\bx)) - \mu_\bT(x))^2 \right] + \Ex_{\bx \sim \zo^n}\left[\Var[\bT(x)]\right]
    \end{align*}
    For $C = \Ex_{\bx \sim \zo^n}\left[\Var[\bT(x)]\right]$, \Cref{eq:l2 error to mean error} holds in expectation over $\mathcal{S}$ with $\eps = 0$. Since $(\trunc(p(x)) - y)^2$ is bounded on $[0,1]$, we can apply Hoeffdings inequality: For any fixed $p$, \Cref{eq:l2 error to mean error} holds with probability at least $1 - 2 \exp_e(-2m^2 \eps^2)$.
    
    We next discretize the set of all truncated degree $d$ polynomials. Let $\mathcal{P}$ be the following finite set of functions,
    \begin{align*}
        \mathcal{P} \coloneqq \{\trunc \circ p \,|\, \text{$p$ is degree-$d$ polynomial with coefficients that are all a multiple of $\eps/{n^d}$}\} 
    \end{align*}
    Degree $d$ polynomials have at most $n^d$ coefficients. Therefore,
    \begin{align*}
        \log(|\mathcal{P}|) \leq \log\left( \left(\frac{n^d}{\eps} \right)^{n^d} \right) = \poly\left(n^{O(d)}, \log(1/\eps) \right).
    \end{align*}
    This means that for the sample size in \Cref{lem:l2 error to mean error}, \Cref{eq:l2 error to mean error} holds for all functions in $\mathcal{P}$ with probability at least $1 - \delta$. We show that \Cref{eq:l2 error to mean error} holding for function in $\mathcal{P}$ implies the desired result.
    
    Every degree $d$ truncated polynomial is \textit{pointwise} close to a function in $\mathcal{P}$: Fix any degree $d$ polynomial $p$. There is some $f \in \mathcal{P}$, for which
    \begin{align*}
        |\trunc(p(x)) -  f(x)| < \eps \quad \text{for all $x \in \zo^n$}.
    \end{align*}
    This $f$ is easy to specify: It's the truncation of $p'$, where $p'$ is $p$ with all of its coefficients rounded to the nearest $\eps/{n^d}$. In order to expand \Cref{eq:l2 error to mean error} to $p$, we use the following inequality for all $a, \eps \in [0,1]$:
    \begin{align*}
        |\,(a + \eps)^2 - a^2\,| = |2a\eps|+ \eps^2 \leq 3 |\eps|.
    \end{align*}
    Therefore,
    \begin{align*}
         \left|\Ex_{(\bx, \by) \sim \mathcal{S}}\left[(\trunc(p(\bx)) - \by)^2 \right] -  \Ex_{(\bx, \by) \sim \mathcal{S}}\left[f(\bx) - \by)^2 \right]\right| &\leq 3 \max_{x \in \zo^n} \left|\,\trunc(p(x)) - f(x) \,\right| \\
         &\leq 3\,\eps.
    \end{align*}
    Similarly, $\Ex_{\bx \sim \zo^n}\left[(\trunc(p(\bx)) - \mu_{\bT}(\bx))^2 \right]$ and $\Ex_{\bx \sim \zo^n}\left[(f(\bx) - \mu_{\bT}(\bx))^2 \right]$ are within $3\eps$ of one another. Finally, by triangle inequality,
    \begin{align*}
        \left| \Ex_{(\bx, \by) \sim \mathcal{S}}\left[(\trunc(p(\bx)) - \by)^2 \right] - \left(\Ex_{\bx \sim \zo^n}\left[(\trunc(p(\bx)) - \mu_{\bT}(\bx))^2 \right] + C\right) \right| \leq 7\eps.
    \end{align*}
    The desired result holds if we rename $\eps' = \frac{\eps}{7}.$
\end{proof}

We are now ready to prove the low-degree algorithm (i.e. $L_2$ regression) part of \Cref{thm:polynomial regression}.
\begin{lemma}[$L_2$ regression part of \Cref{thm:polynomial regression}]
    Choose any $\eps, \eta, \delta \in (0,1)$, $s \in \N$, and size-$s$ stochastic decision tree $\bT: \zo^n \to \zo$. For a sample of size 
    \begin{align*}
        m \coloneqq \poly\left(n^{O(d)}, \frac{1}{\eps}, \log\left(\frac{1}{\delta}\right) \right),
    \end{align*}
    let $\mathcal{S}$ be an $\eta$-corrupted set of $m$ uniform random samples from $\bT$. If
    \begin{align*}
        p^* = \argmin_{\text{Degree $\log(s/\eps)$ polynomials $p$}} \left( \Ex_{(\bx, \by) \sim \mathcal{S}}\left[(p(\bx) - \by)^2 \right] \right),
    \end{align*}
    and $h:\zo^n \to \zo$ is the hypothesis $h(x) = \round(\trunc(p^*(x)))$. Then with probability at least $1 - \delta$ over the randomness of the sample,
    \begin{align*}
        \error_\bT(h) \leq \opt + O(\sqrt{\eta}) + \eps.
    \end{align*}
\end{lemma}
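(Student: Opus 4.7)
The plan is to combine the three main facts already established: the existence of a bounded low-degree approximator to $\mu_\bT$ (\Cref{prop:mu-low-degree-bounded}), the stability of empirical $L_2$ error under corruption (\Cref{fact:bound on corruption}), and the connection between empirical error on clean samples and the true $L_2$ error to $\mu_\bT$ (\Cref{lem:l2 error to mean error}). The key quantity to control is $\E_{\bx}[(\trunc(p^\star(\bx)) - \mu_\bT(\bx))^2]$, which I will bound by $O(\eta) + O(\eps)$; the $\sqrt{\eta}$ loss in the final error bound comes from a single application of Cauchy--Schwarz to convert $L_2$ distance to $L_1$ distance, after which \Cref{fact:L1-to-Bayes} delivers the result.

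First I would invoke \Cref{prop:mu-low-degree-bounded} to obtain a degree-$\log(s/\eps)$ polynomial $p : \zo^n \to [0,1]$ with $\Pr_{\bx}[p(\bx) \ne \mu_\bT(\bx)] \le \eps$, so in particular $\E_{\bx}[(p(\bx) - \mu_\bT(\bx))^2] \le \eps$ because both functions are bounded in $[0,1]$. Then I would write down the chain of inequalities that relates $\E_{\bx}[(\trunc(p^\star(\bx)) - \mu_\bT(\bx))^2]$ to the corresponding quantity for the reference $p$, passing through (in order): (i) \Cref{lem:l2 error to mean error} applied to $p^\star$, moving from the true $L_2$ error to the empirical $L_2$ error on the clean sample $\mathcal{S}^\circ$ at the cost of an additive $\eps$ and the variance constant $C$; (ii) \Cref{fact:bound on corruption} to move from $\mathcal{S}^\circ$ to the corrupted $\mathcal{S}$ at the cost of $\eta$; (iii) the observation $\E_{\mathcal{S}}[(\trunc(p^\star(\bx)) - \by)^2] \le \E_{\mathcal{S}}[(p^\star(\bx) - \by)^2]$, which uses only that $\by \in [0,1]$; (iv) the minimality of $p^\star$, giving $\E_{\mathcal{S}}[(p^\star(\bx) - \by)^2] \le \E_{\mathcal{S}}[(p(\bx) - \by)^2]$; (v) another application of \Cref{fact:bound on corruption} to return to $\mathcal{S}^\circ$; and (vi) another application of \Cref{lem:l2 error to mean error} to $p$, noting that the same $C$ cancels. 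The net result is
\[
\E_{\bx}\!\big[(\trunc(p^\star(\bx)) - \mu_\bT(\bx))^2\big] \;\le\; 2\eta + O(\eps).
\]

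Once this is in hand, I would apply Cauchy--Schwarz to get $\E_{\bx}[|\trunc(p^\star(\bx)) - \mu_\bT(\bx)|] \le \sqrt{2\eta + O(\eps)}$, and then \Cref{fact:L1-to-Bayes} (with $h = \round \circ \trunc \circ p^\star$) yields
\[
\Pr[h(\bx) \ne \bT(\bx)] \;\le\; \opt + 2\sqrt{2\eta + O(\eps)} \;\le\; \opt + O(\sqrt{\eta}) + O(\sqrt{\eps}),
\]
where the last step uses $\sqrt{a+b} \le \sqrt{a} + \sqrt{b}$. Rescaling the internal tolerance from $\eps$ to $\eps^2$ (so that the original sample-size polynomial absorbs the change) converts $O(\sqrt{\eps})$ into the desired additive $\eps$, giving $\opt + O(\sqrt{\eta}) + \eps$.

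The main obstacle I anticipate is bookkeeping rather than ideation: the variance constant $C$ from \Cref{lem:l2 error to mean error} must be the same on both sides of the chain (which is fine since both applications involve the same distribution over $\bT$), and I must be careful that \Cref{lem:l2 error to mean error} is used in a form that holds \emph{uniformly} over all degree-$\log(s/\eps)$ polynomials -- this is exactly what the lemma provides via its discretization argument, so I can apply it simultaneously to the data-dependent $p^\star$ and to the fixed $p$. The only subtle step is (iii): I need $\by \in [0,1]$ on the corrupted sample, which I may need to enforce by declaring that an adversary replacing labels outside $[0,1]$ would only hurt itself, or by restricting attention to corruptions that remain in the natural label range.
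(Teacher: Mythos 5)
Your proposal is correct and follows essentially the same chain of inequalities as the paper's proof: use \Cref{prop:mu-low-degree-bounded} for a bounded reference polynomial $p$, pass the $L_2$ objective through \Cref{lem:l2 error to mean error} and \Cref{fact:bound on corruption} (with the constant $C$ cancelling), invoke the minimality of $p^*$ and the fact that truncation can only decrease squared error against $[0,1]$-valued labels, then convert $L_2$ to $L_1$ via Jensen/Cauchy--Schwarz and finish with \Cref{fact:L1-to-Bayes} and the $\eps \mapsto \Theta(\eps^2)$ rescaling. The only remark worth making is that the concern you flag at the end about $\by\in[0,1]$ on the corrupted sample is already resolved by the paper's noise model: the nasty-noise definition only allows flipping labels within $\{0,1\}$ (and \Cref{fact:bound on corruption} is stated under the hypothesis that replacement points remain bounded), so no extra argument is needed there.
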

\begin{proof}
    Let $\mathcal{S}^\circ$ be the original uncorrupted (i.i.d) set of samples, from which $\mathcal{S}$ differs on at most $\eta$ fraction of points. By \Cref{lem:l2 error to mean error}, \Cref{eq:l2 error to mean error} holds, with respect to $\mathcal{S}^\circ$, for all degree $\log (s / \eps)$ polynomials with probability at least $1 - \delta$. We show that if it holds, then $\error_\bT(h) \leq \opt +O(\sqrt{\eta}) + \eps$.
    
    \Cref{prop:mu-low-degree-bounded} guarantees there exists $p: \zo^n \to [0,1]$, a degree $\log(s/\eps)$ bounded polynomial, satisfying
    \begin{align*}
         \Ex_{\bx\sim \zo^n}\left[ (p(\bx)-\mu_\bT(\bx))^2 \right] \le \eps.
    \end{align*}
    Fix $C$ as in \Cref{lem:l2 error to mean error}. Combining \Cref{eq:l2 error to mean error} and \Cref{fact:bound on corruption}, we have that
    \begin{align*}
        \Ex_{(\bx, \by) \sim \mathcal{S}}\left[(p(\bx) - \by)^2 \right] \leq  C + 2\eps + \eta.
    \end{align*}
    Since $p^*$ has the minimum $L_2$ error of all degree $\log(s/\eps)$ polynomials on $\mathcal{S}$,
    \begin{align*}
        \Ex_{(\bx, \by) \sim \mathcal{S}}\left[(p^*(\bx) - \by)^2 \right] \leq  C + 2\eps + \eta.
    \end{align*}
    Truncating $p^*$ can only decrease its $L_2$ error. Combining that with a second application of \Cref{fact:bound on corruption},
    \begin{align*}
        \Ex_{(\bx, \by) \sim \mathcal{S}^\circ}\left[(\trunc(p^*(\bx)) - \by)^2 \right] \leq  C + 2\eps + 2\eta.
    \end{align*}
    Then, by \Cref{eq:l2 error to mean error},
    \begin{align}
        \label{eq:l2 error low}
        \Ex_{\bx \sim \zo^n}\left[(\trunc(p^*(\bx)) - \mu_{\bT}(\bx))^2 \right]  \leq 3\eps + 2\eta.
    \end{align}
    Finally,
    \begin{align*}
        \error_{\bT}(h) &\leq \BayesOpt_\bT + 2\Ex_{\bx \sim \zo^n}[|\,\mu_\bT(\bx) - \trunc(p^*(\bx))\,|] \tag*{\Cref{fact:L1-to-Bayes}} \\
        &\leq \BayesOpt_\bT + 2\sqrt{\Ex_{\bx \sim \zo^n}[(\mu_\bT(\bx) - \trunc(p^*(\bx)))^2]} \tag*{Jensen's inequality}\\
         &\leq \BayesOpt_\bT + 2\sqrt{3\eps + 2\eta} \tag*{\Cref{eq:l2 error low}} \\
        &\leq \BayesOpt_\bT + O(\sqrt{\eps}) + O(\sqrt{\eta}).
    \end{align*}
    The desired result then holds by renaming $\eps' = \Omega(\eps^2)$.
\end{proof}

\subsection{$L_1$ regression}

We will need the following generalization bound:
\begin{lemma}[$L_1$ error generalization]
\label{lem:l1 error generalization}
    Fix any stochastic DT $\bT : \zo^n \to \zo$, degree $d \in \N$, and $\eps, \delta > 0$. For a sample size of
    \begin{align*}
        m \coloneqq \poly\left(n^{O(d)}, \frac{1}{\eps}, \log\left(\frac{1}{\delta}\right) \right),
    \end{align*}
    let $\mathcal{S}^\circ$ be a dataset of $m$ i.i.d points of the form $(\bx, \bT(\bx))$. With probability at least $1 - \delta$, the following holds for all degree $d$ polynomials $p: \zo^n \to \R$.
    \begin{align}
        \label{eq:l1 error generalization}
        \left| \Ex_{(\bx, \by) \sim \mathcal{S}^\circ}\left[|\trunc(p(\bx)) - \by|^2 \right] - \Ex_{\bx \sim \zo^n, \bT}\left[|\trunc(p(\bx)) - \bT(\bx)|\right] \right| \leq \eps.
    \end{align}

\end{lemma}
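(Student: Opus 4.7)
The plan is to follow exactly the two-step strategy used in the proof of \Cref{lem:l2 error to mean error} (the $L_2$ generalization bound), adapted to the $L_1$ setting. The $L_1$ version is in fact simpler: because there is no variance term to subtract off, we do not need to introduce an additive constant $C$, and the empirical loss concentrates directly around its expectation.

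\textbf{Step 1 (fix a polynomial, apply Hoeffding).} First I would fix a degree-$d$ polynomial $p$. For each i.i.d.\ sample $(\bx_i,\by_i) \in \mathcal{S}^\circ$, the random variable $|\trunc(p(\bx_i)) - \by_i|$ lies in $[0,1]$ and has expectation $\Ex_{\bx,\bT}[|\trunc(p(\bx)) - \bT(\bx)|]$ (the expectation over $\bT$ handles the stochasticity of labels). Hoeffding's inequality then gives
\[
\Prx_{\mathcal{S}^\circ}\!\left[\left|\Ex_{(\bx,\by)\sim \mathcal{S}^\circ}[|\trunc(p(\bx))-\by|] - \Ex_{\bx,\bT}[|\trunc(p(\bx))-\bT(\bx)|]\right| > \eps\right] \le 2\exp(-2m\eps^2).
\]

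\textbf{Step 2 (discretize to a finite cover).} Next I would reuse the same discretization as in \Cref{lem:l2 error to mean error}: let $\mathcal{P}$ be the set of truncations of degree-$d$ polynomials whose coefficients are integer multiples of $\eps/n^d$. Since a degree-$d$ polynomial has at most $n^d$ coefficients, we get $\log|\mathcal{P}| = \poly(n^d,\log(1/\eps))$. Union-bounding the estimate from Step~1 over $f \in \mathcal{P}$, and choosing $m = \poly(n^{O(d)}, 1/\eps, \log(1/\delta))$, we obtain that (\ref{eq:l1 error generalization}) holds simultaneously for every $f \in \mathcal{P}$ with probability at least $1-\delta$.

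\textbf{Step 3 (extend from $\mathcal{P}$ to all degree-$d$ polynomials).} Finally I would argue that the cover is fine enough pointwise. Given an arbitrary degree-$d$ polynomial $p$, round each coefficient to the nearest multiple of $\eps/n^d$ to obtain $p'$, and let $f = \trunc \circ p' \in \mathcal{P}$. Since each monomial on $\zo^n$ is bounded in absolute value by $1$ and there are at most $n^d$ of them, $|p(x) - p'(x)| \le \eps$ for every $x \in \zo^n$, and $\trunc$ is $1$-Lipschitz, so $|\trunc(p(x)) - f(x)| \le \eps$ pointwise. Using $||a-y| - |b-y|| \le |a-b|$, both the empirical and population $L_1$ errors of $p$ differ from those of $f$ by at most $\eps$, and a triangle inequality transfers (\ref{eq:l1 error generalization}) from $f$ to $p$ with an extra additive $2\eps$; renaming $\eps' = \eps/3$ yields the statement.

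The one ``obstacle'' worth flagging is ensuring that the covering argument in Step~3 works even though the uncorrupted labels $\by$ are themselves random: this is handled cleanly because the Hoeffding bound in Step~1 is taken against the fully marginalized expectation $\Ex_{\bx,\bT}[\cdot]$, and the pointwise closeness $|\trunc(p(x))-f(x)| \le \eps$ does not interact with the randomness of labels at all. Everything else is a direct transcription of the $L_2$ generalization proof, with the variance constant $C$ removed.
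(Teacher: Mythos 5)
Your proposal is correct and is exactly the argument the paper intends: the paper omits this proof with the remark that it ``can be proven using the same discretization argument as \Cref{lem:l2 error to mean error},'' and your three steps (Hoeffding for a fixed polynomial, union bound over the $\eps/n^d$-grid of coefficients, pointwise $1$-Lipschitz transfer from the cover to an arbitrary degree-$d$ polynomial) are a faithful reconstruction, correctly simplified by dropping the additive constant $C$ since the $L_1$ loss has no variance term to cancel. The only thing worth noting is that the squared exponent in the displayed inequality of the lemma statement is a typo in the paper (it should read $|\trunc(p(\bx))-\by|$, not $|\trunc(p(\bx))-\by|^2$), and your proof correctly treats the intended $L_1$ version.
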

\Cref{lem:l1 error generalization} can be proven using the same discretization argument as \Cref{lem:l2 error to mean error}. We omit the proof for brevity.

\begin{lemma}[$L_1$ regression part of \Cref{thm:polynomial regression}]
\label{lem:l1 regression}
    Choose any $\eps, \eta, \delta \in (0,1)$, $s \in \N$, and size-$s$ stochastic decision tree $\bT: \zo^n \to \zo$. For a sample of size 
    \begin{align*}
        m \coloneqq \poly\left(n^{O(d)}, \frac{1}{\eps}, \log\left(\frac{1}{\delta}\right) \right),
    \end{align*}
    let $\mathcal{S}$ be an $\eta$-corrupted set of $m$ uniform random samples from $\bT$. If
    \begin{align*}
        p^* = \argmin_{\text{Degree $\log(s/\eps)$ polynomials $p$}} \left( \Ex_{(\bx, \by) \sim \mathcal{S}}\left[|\,p(\bx) - \by)\,| \right] \right),
    \end{align*}
    and $\bh:\zo^n \to \zo$ is the randomized hypothesis where $\bh(x)$ is $1$ with probability $\trunc(p^*(x))$ and $0$ otherwise. Then with probability at least $1 - \delta$ over the randomness of the sample,
    \begin{align*}
        \error_\bT(\bh) \leq 2\opt + 2\eta + \eps.
    \end{align*}
\end{lemma}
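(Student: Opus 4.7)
The plan is to mirror the $L_2$-regression argument above, using a bounded low-degree ``comparator'' polynomial whose expected $L_1$ error against $\bT$ on the true distribution is at most $2\opt + \eps$, then pushing this bound through the corrupted sample to the empirical minimizer $p^\star$. I would first apply \Cref{prop:mu-low-degree-bounded} to obtain a degree-$\log(s/\eps)$ polynomial $p : \zo^n \to [0,1]$ with $\Prx_{\bx}[p(\bx)\ne \mu_\bT(\bx)]\le \eps$; boundedness of $p$ and $\mu_\bT$ upgrades this to $\Ex_\bx[|p(\bx)-\mu_\bT(\bx)|] \le \eps$. A short pointwise calculation gives $\Ex_\bT[|\mu_\bT(\bx)-\bT(\bx)|] = 2\mu_\bT(\bx)(1-\mu_\bT(\bx)) \le 2\min(\mu_\bT(\bx),1-\mu_\bT(\bx))$, so averaging yields $\Ex_{\bx,\bT}[|\mu_\bT(\bx)-\bT(\bx)|] \le 2\opt$, and the triangle inequality then gives $\Ex_{\bx,\bT}[|p(\bx)-\bT(\bx)|] \le 2\opt + \eps$.

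Next, chaining \Cref{lem:l1 error generalization} (true distribution to the uncorrupted sample $\mathcal{S}^\circ$) with \Cref{fact:bound on corruption} ($\mathcal{S}^\circ$ to the corrupted sample $\mathcal{S}$) transfers this to $\Ex_{(\bx,\by)\sim\mathcal{S}}[|p(\bx)-\by|]\le 2\opt + \eta + O(\eps)$. By the empirical $L_1$-optimality that defines $p^\star$, the same upper bound holds for $p^\star$. Since labels lie in $[0,1]$, the pointwise inequality $|\trunc(z)-y|\le |z-y|$ for $y\in [0,1]$ shows that truncation can only decrease empirical $L_1$ error, and a second application of \Cref{fact:bound on corruption} followed by \Cref{lem:l1 error generalization} pushes the inequality back to the true distribution:
\[ \Ex_{\bx,\bT}[\,|\trunc(p^\star(\bx))-\bT(\bx)|\,] \le 2\opt + 2\eta + O(\eps). \]

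To close the argument I would identify this $L_1$ error with $\error_\bT(\bh)$: for the randomized hypothesis $\bh$ that outputs $1$ with probability $\trunc(p^\star(\bx))$, a direct pointwise calculation shows $\Prx_{\bh,\bT}[\bh(\bx)\ne \bT(\bx)] = \trunc(p^\star(\bx))(1-\mu_\bT(\bx)) + (1-\trunc(p^\star(\bx)))\mu_\bT(\bx) = \Ex_\bT[\,|\trunc(p^\star(\bx))-\bT(\bx)|\,]$; averaging over $\bx$ gives $\error_\bT(\bh) = \Ex_{\bx,\bT}[\,|\trunc(p^\star(\bx))-\bT(\bx)|\,]$, and rescaling $\eps$ yields the claimed $2\opt + 2\eta + \eps$ bound. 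The main conceptual obstacle, I expect, is maintaining boundedness throughout the chain: without \Cref{prop:mu-low-degree-bounded}'s $[0,1]$ guarantee for the comparator $p$, the $\eta$-fraction of corrupted samples could be exploited to drive $p$'s empirical $L_1$ error arbitrarily low and break the transfer, and the truncation step on the output side plays the analogous role for $p^\star$; both moves rely on labels lying in $[0,1]$, the same hypothesis that makes \Cref{fact:bound on corruption} applicable.
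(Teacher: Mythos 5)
Your proposal is correct and follows essentially the same argument as the paper's proof: same bounded low-degree comparator from \Cref{prop:mu-low-degree-bounded}, same $2\opt$ bound via $2\mu_\bT(1-\mu_\bT) \le 2\min(\mu_\bT, 1-\mu_\bT)$, same chain of \Cref{lem:l1 error generalization} and \Cref{fact:bound on corruption} through the empirical minimizer with truncation, and the same identification of $\error_\bT(\bh)$ with the $L_1$ error of $\trunc(p^\star)$. The only difference is cosmetic: you spell out the pointwise derivation of $\Prx_{\bh,\bT}[\bh(\bx)\ne\bT(\bx)] = \Ex_\bT[|\trunc(p^\star(\bx))-\bT(\bx)|]$, which the paper asserts directly.
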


\begin{proof}
    Let $\mathcal{S}^\circ$ be the original uncorrupted (i.i.d) set of samples from which $\mathcal{S}$ differs on at most $\eta$ fraction of points. By \Cref{lem:l1 error generalization}, \Cref{eq:l1 error generalization} holds, with respect to $\mathcal{S}^\circ$ for all degree $\log (s / \eps)$ polynomials with probability at least $1 - \delta$. We show that if it holds, then $\error_\bT(h) \leq 2\opt + 2\eta + \eps$.

    \Cref{prop:mu-low-degree-bounded} guarantees there exists $p: \zo^n \to [0,1]$, a degree $\log(s/\eps) $ polynomial, satisfying
   \begin{align*}
         \Ex_{\bx}\left[ \left|p(\bx)-\mu_\bT(\bx)\right| \right] \le \eps.
    \end{align*}
    We first bound the expected error of $\mu_\bT(\bx)$ relative to $\bT(\bx)$.
    \begin{align*}
        \Ex_{\bx}\left[\left|\mu_\bT(\bx) - \bT(\bx)\right|\right] &= \Ex_{\bx}\left[\Pr[\bT(\bx) = 1](1 - \mu_\bT(\bx)) + \Pr[\bT(\bx) = 0](\mu_\bT(\bx))\right]  \\
        &=\Ex_{\bx}\left[2 \mu_\bT(\bx)(1-\mu_\bT(\bx))\right]\\
        &\leq 2\cdot \Ex_{\bx}\left[\min(\mu_\bT(\bx), 1-\mu_\bT(\bx))\right] \\
        &= 2\cdot \BayesOpt_{\bT}
    \end{align*}
    By triangle inequality, we have that $\Ex_{\bx}[|p(\bx) - \bT(\bx)|] \leq 2\cdot \BayesOpt_{\bT} + \eps$. By \Cref{eq:l1 error generalization}
    \begin{align*}
        \Ex_{\bx, \by \sim \mathcal{S}^\circ}[|p(\bx) - \by|] \leq 2\cdot \BayesOpt_{\bT} + 2\eps.
    \end{align*}
    By \Cref{fact:bound on corruption} initialized with $\mathrm{err}(x,y) = |x - y|$,
    \begin{align*}
        \Ex_{\bx, \by \sim \mathcal{S}}[|p(\bx) - \by|] \leq 2\cdot \BayesOpt_{\bT} + 2\eps + \eta.
    \end{align*}
    Since $p^*$ has minimum $L_1$ error among all degree $\log(s/\eps)$ polynomials,
    \begin{align*}
        \Ex_{\bx, \by \sim \mathcal{S}}[|p^*(\bx) - \by|] \leq 2\cdot \BayesOpt_{\bT} + 2\eps + \eta.
    \end{align*}
    Reapplying \Cref{fact:bound on corruption}, combined with the fact that truncating $p^*$ can only decrease its error,
    \begin{align*}
        \Ex_{\bx, \by \sim \mathcal{S}^\circ}[|\trunc(p^*(\bx)) - \by|] &\leq  \Ex_{\bx, \by \sim \mathcal{S}}[|\trunc(p^*(\bx)) - \by|] + \eta \\
        &\leq 2\cdot \BayesOpt_{\bT} + 2\eps + 2\eta.
    \end{align*}
    Applying \Cref{eq:l1 error generalization} again.
    \begin{align*}
        \Ex_{\bx \sim \zo^n}[|p(\bx) - \bT(\bx)|] &\leq \Ex_{\bx, \by \sim \mathcal{S}^\circ}[|p(\bx) - \bT(\bx)|] + \eps \\
        &\leq 2\cdot \BayesOpt_{\bT} + 3\eps + 2\eta.
    \end{align*}
    Finally, since $\bh(x)$ returns $1$ with probability $\trunc(p^*(x))$, and $\bT(x)$ is always in $\zo$,
    \begin{align*}
        \Prx_{\bx, \bh, \bT}[\bh(\bx) \neq \bT(\bx)] &=  \Ex_{\bx \sim \zo^n}[|p(\bx) - \bT(\bx)|] \\ 
         &\leq 2\cdot \BayesOpt_{\bT} + 3\eps + 2\eta.
    \end{align*}
     The desired result holds with the renaming $\eps' = \frac{\eps}{3}$.
\end{proof}

\section*{Acknowledgements}

We thank the anonymous reviewers for their detailed and helpful feedback.  

\bibliography{most-influential}{}
\bibliographystyle{alpha}

\end{document}